\newtheorem{theorem}{Theorem}
\newtheorem{lemma}[theorem]{Lemma}
\newtheorem{corollary}[theorem]{Corollary}
\newcommand{\oea}{\mbox{${(1 + 1)}$~EA}\xspace}
\newcommand{\ooea}{\oea}
\newcommand{\sdfea}{\mbox{SD-FEA$_{\beta,\gamma,R}$}\xspace}
\newcommand{\sdooea}{SD-(1+1)~EA\xspace}
\newcommand{\onemax}{\textsc{OneMax}\xspace}
\newcommand{\LO}{\textsc{Leading\-Ones}\xspace}
\newcommand{\leadingones}{\LO}
\newcommand{\jump}{\textsc{Jump}\xspace}
\newcommand{\R}{\ensuremath{\mathbb{R}}}
\newcommand{\N}{\ensuremath{\mathbb{N}}} 
\newcommand{\eps}{\varepsilon}
\let\originalleft\left
\let\originalright\right
\renewcommand{\left}{\mathopen{}\mathclose\bgroup\originalleft}
\renewcommand{\right}{\aftergroup\egroup\originalright}
\DeclarePairedDelimiter\floor{\lfloor}{\rfloor}
\newcommand{\ie}{i.\,e.\xspace}
\DeclareMathOperator{\im}{Im}
\newcommand{\oofea}{(1+1)~FEA$_\beta$\xspace}
\newcommand{\card}[1]{\lvert #1\rvert}
\DeclareMathOperator{\fitnesslevelgap}{FitnessLevelGap}
\DeclareMathOperator{\individualgap}{IndividualGap}
\DeclareMathOperator{\pow}{pow}
\DeclarePairedDelimiter\abs{\lvert}{\rvert}%
\newcommand\recForR{e^{1/\gamma}}
\newcommand\ellr{(1-\gamma)^{-1}\binom{n}{r}\ln(R)}
\newcommand{\sdrlss}{SD-RLS$^{\text{r}}$\xspace}
\newcommand{\sdrlsss}{SD-RLS$^{\text{m}}$\xspace}
\begin{document}
{\sloppy
\title{Stagnation Detection Meets Fast Mutation}

\author{Benjamin Doerr\setcounter{footnote}{0}\thanks{Laboratoire d'Informatique (LIX), CNRS, \'Ecole Polytechnique, Institut Polytechnique de Paris, Palaiseau, France}
\and 
  Amirhossein Rajabi\setcounter{footnote}{1}\thanks{ Technical University of Denmark,
	Kgs. Lyngby, Denmark}
}

\maketitle

\begin{abstract}
Two mechanisms have recently been proposed that can significantly speed up finding distant improving solutions via mutation, namely using a random mutation rate drawn from a heavy-tailed distribution (``fast mutation'', Doerr et al. (2017)) and increasing the mutation strength based on a stagnation detection mechanism (Rajabi and Witt (2020)). Whereas the latter can obtain the asymptotically best probability of finding a single desired solution in a given distance, the former is more robust and  performs much better when many improving solutions in some distance exist. 

In this work, we propose a mutation strategy that combines ideas of both mechanisms. We show that it can also obtain the best possible probability of finding a single distant solution. However, when several improving solutions exist, it can outperform both the stagnation-detection approach and fast mutation. The new operator is more than an interleaving of the two previous mechanisms and it outperforms any such interleaving.
\end{abstract}

\section{Introduction}

Leaving local optima is a challenge for evolutionary algorithms. Mutation-based approaches are challenged by the fact that the typical mutation rate of $p=1/n$ rarely leads to offspring in a larger distance from the parent. When using larger mutation rates, the choice of the mutation rate is critical and small constant-factor deviations from the optimal rate can lead to huge performance losses~\cite[Cor.~4.2]{DoerrLMN17}. 

Two ways to overcome this problem were proposed recently, namely the use of a random mutation rate sampled from a power-law distribution (``fast mutation'')~\cite{DoerrLMN17} and the successive increase of the mutation rate when a stagnation-detection mechanism indicates that the current rate is unlikely to generate solutions not seen yet~\cite{RajabiW20}. An improved version of this stagnation-detection approach~\cite{RajabiW21evocop}, the so-called SD-RLS algorithm based on $k$-bit mutation instead of standard bit mutation, can find a single improving solution in distance $m$ in expected time $(1+o(1)) \binom{n}{m}$ (without knowing that the distance to the desired solution is $m$). Apart from lower order terms, this is the same runtime that can be obtained via a repeated use of the best unbiased mutation operator that is aware of $m$ (which is, naturally, flipping $m$ random bits). It is faster than the fast \oea by a factor of $\Omega(m)$.

While the SD-RLS algorithm thus is very efficient in finding a single desired solution (and thus has very good runtimes on the classic jump functions benchmark (see Section~\ref{sec:jump} for a definition)), this algorithm has a poor performance when there are several improving solutions in distance $m$ as now the stagnation detection approach leads to too much time spent on too small mutation strengths. Taking as an extreme example the generalized jump function~\cite{BamburyBD21} (see again Section~\ref{sec:jump} for a definition) having a valley of low fitness of width $\delta$, $\delta \ge 2$ a constant, in distance $n/4$ from the optimum, we easily see that the SD-RLS takes an expected time of $\Omega(n^{\delta-1})$ to traverse the fitness valley, whereas the \oea both with the classic mutation operator and with fast mutation does so in expected constant time.

\textbf{Our results:} Based on the insight that fast mutation and stagnation detection have complementary strengths, we design a mutation-based approach that takes inspiration from both approaches. We follow, in principle, the basic version of the improved stagnation-detection approach of~\cite{RajabiW21evocop}, that is, we start with mutation strength $r=1$ and increase $r$ gradually. More precisely, when strength~$r$ has been used for a certain number $\ell_r$ of iterations without that an improvement was found, we increase $r$ by one since we assume that no improvement in distance $r$ exists (we omit some technical details in this first presentation of our approach, e.g., that we do not increase $r$ beyond $n/2.1$, and refer the reader to Algorithm~\ref{alg:fastsd} for the full details). Different from~\cite{RajabiW21evocop}, when the current strength is $r$, we do not always flip $r$ random bits as mutation operation, but we choose a random number $X_r$ of bits to flip. This number is equal to $r$ with probability $1-\gamma$, where $\gamma$ is an algorithm parameter that is usually small (a small constant or $o(1)$). With probability $\gamma$, however, $X_r$ deviates from $r$ by an amount following a power-law distribution with exponent $\beta$. The precise definition of this case (see again Algorithm~\ref{alg:fastsd}) is not too important, so for this first exposition we can assume that we sample $D$ from a power-law distribution (with exponent $\beta$) on the positive integers and then, each with probability $1/2$, flip $r+D$ or $r-D$ random bits (where we do nothing if this number is not between $1$ and $n$). 

Since with probability $1-\gamma$ we essentially follow the basic approach of~\cite{RajabiW21evocop}, it is not surprising that we find a single closest improving solution in distance~$m$ in an expected time of $\frac{1}{1-\gamma} (1+o(1)) \binom{n}{m}$, again without that the algorithm needs to know~$m$ (Theorem~\ref{th:escaping-time}). If $\gamma = o(1)$, this is again the optimal time of $(1+o(1)) \binom{n}{m}$ discussed above. We note, however, that our algorithm is simpler than the solution presented in~\cite{RajabiW21evocop}. The basic SD-RLS algorithm proposed in~\cite{RajabiW21evocop} obtains a runtime of $(1+o(1)) \binom{n}{m}$ only with high probability and otherwise fails. To turn this algorithm into one that never fails and has an expected runtime of $(1+o(1)) \binom{n}{m}$, a robust version of the SD-RLS was developed in~\cite{RajabiW21evocop} as well. This version repeats previous phases as follows. When the $\ell_r$ uses of strength~$r$ have not led to an improvement, before increasing the rate to $r+1$, first another~$\ell_i$ iterations are performed with strength $i$, for $i = r-1, \dots, 1$. In our approach, such an additional effort is not necessary since the fast mutations automatically render the algorithm robust.

The use of a heavy-tailed mutation rate also helps in situations where the stagnation-detection mechanism takes too long to use larger mutation strengths. Since in phases $r = 1, \dots, 2m$ the probability to flip $m$ bits is at least $\gamma/2$ times the probability of this event in a run of the fast \oea, it is not surprising that our algorithm finds an improvement in distance~$m$ is at most~$2/\gamma$ times the time of the fast \oea, which as discussed above can be significantly faster than the SD-RLS. Such a result could also have been obtained from  a simple interleaving of SD-RLS and fast \oea iterations. Since our heavy-tailed choices of the mutation strength, however, take into account the current strength~$r$, we often obtain better runtimes, often better than both the SD-RLS and the fast \oea. As the precise statement of these results is technical, we defer the details to Section~\ref{sec:workingprinciples}. As a simple example showing the outperformance of our algorithm, we regard the generalized jump function $\jump_{m,\delta:=m-\Delta}$ for a constant value of $\Delta \ge 2$ and $m = \omega(1)$. This jump function is similar to the classic jump function $\jump_m$, but the valley of low fitness consists not of all search points in positive distance at most $m-1$ from the optimum, but only of those in distance $\Delta+1, \dots, m-1$. Consequently, from the local optimum there is not a single improving solution, but $\Theta(n^\Delta)$. Note that this is still relatively few compared to the fitness valley of size essentially $\binom{n}{m-1}$. On this generalized jump function, the expected runtime of SD-RLS is $O\left(\binom{n}{\delta-1}\ln(R)\right)$, the one of the fast \oea is $O\left(\delta^{\beta-0.5}(en/\delta)^\delta n^{-\Delta}\right)$, and the one of our algorithm is at most~$O\left(\binom{n}{\delta}n^{-\Delta} \gamma^{-1} \right)$ (Corollary~\ref{cor:generaljump}). Since it is also clear that any interleaving of SD-RLS and fast \oea iterations cannot give a better runtime than the one of the two pure algorithms, this result shows that our algorithm can beat SD-RLS and fast EA (and any simple mix of them) when there are several improving solutions in a given distance. 

A short experimental evaluation of the algorithms discussed so far shows that the advantages of our algorithm, proven only via asymptotic runtime results, are also visible for moderate problems sizes.

\textbf{Structure of this paper:} After reviewing the most relevant previous works in Section~\ref{sec:previous}, we introduce our new algorithm in Section~\ref{sec:algo}. In Section~\ref{sec:workingprinciples}, we analyze via mathematical means how our algorithm finds an improvement in distance $m$ both when this is typically achieved in phase $m$ (e.g., when there is only one improving solution in distance $m$) and when this is achieved earlier via the heavy-tailed rates. We use these results in Section~\ref{sec:results} to prove several runtime results, among others, for generalized jump functions. We present some experimental results in Section~\ref{sec:experiments}. In Section~\ref{sec:parameters}, we discuss recommendations on how to set the parameters of our algorithm. We conclude the paper with a short discussion of our results and a pointer to possible future work in Section~\ref{sec:conclusion}.

\section{Previous Works}\label{sec:previous}

This work aims at combining the advantages of stagnation detection and heavy-tailed mutation, so clearly these topics contain the most relevant previous works. Both integrate into the wider questions of how to optimally set the mutation strength of evolutionary algorithms (for this we refer to the recent survey~\cite{DoerrD20bookchapter}) and how evolutionary algorithms can leave local optima (here we refer to~\cite[Section~2.1]{Doerr20gecco} for a discussion of non-elitist approaches and to the introduction of~\cite{DangFKKLOSS18} for a discussion of crossover-based approaches).

For elitist mutation-based approaches, it is clear that when the population has converged to a local optimum the only way to leave this is by mutating a solution from the local optimum into an at least as good solution outside this local optimum. It was observed in~\cite{DoerrLMN17} (the earlier work~\cite{Prugel04} contains similar findings for the special case that the nearest improving solution is in Hamming distance two or three) that standard bit mutation with mutation rate $p=\frac 1n$, which is the most recommended way of doing mutation, is not perfectly suitable to perform larger jumps in the search space. In fact, when the nearest improving solution is in Hamming distance~$m$, then a mutation rate of $p=\frac mn$ is much better, leading to a speed-up by a factor of order $m^{\Theta(m)}$. 

Since~\cite{DoerrLMN17} also observed that missing the optimal rate by a small constant factor leads to performance losses exponential in $m$, it was proposed to use a mutation rate that is drawn from a (heavy-tailed) power-law distribution. Without the need to know $m$, this approach led to runtimes that exceed the ones obtained from the optimal rate $p=\frac mn$ by only a small factor polynomial in $m$. This price for universality can be made as low as $\Theta(m^{0.5+\eps})$, but not smaller than $\Theta(\sqrt m)$. Various variants of heavy-tailed mutation operators have been proposed subsequently, also heavy-tailed choices of other parameters have been used with great success~\cite{FriedrichQW18,FriedrichGQW18,FriedrichGQW18heavysubm,WuQT18,AntipovBD20gecco,AntipovBD20ppsn,AntipovD20ppsn,AntipovBD21gecco,DoerrZ21aaai,CorusOY21, corus2021fast}.

A different way to cope with local optima was proposed in~\cite{RajabiW20}. When an algorithm is stuck in a local optimum for a sufficiently long time, then with high probability it has explored all search points in a certain radius. Consequently, it is safe to increase the mutation rate, which increases the probability to generate more distant solutions. This is the main idea of a series of works on stagnation detection~\cite{RajabiW20,RajabiW21gecco,RajabiW21evocop}. As shown in~\cite{RajabiW20}, this approach can save the polynomial price for universality of the heavy-tailed approach and thus obtain runtimes of the same asymptotic order as when using the optimal (problem-specific) mutation rate. By replacing standard bit mutation with $m$-bit flips, the time to find a particular solution in Hamming distance $m$ was further reduced to $(1+o(1)) \binom{n}{m}$, the same time (apart from lower order terms) one would obtain with the best unbiased mutation operator (which consists of flipping $m$ random bits). 

To be precise, two approaches are discussed in~\cite{RajabiW21evocop}. The simple one, obtained from just replacing standard bit mutation in~\cite{RajabiW20} by $r$-bit mutation, obtains the desired runtimes with high probability, but fails completely with some very small probability. For this reason, also a robust version of the algorithm was proposed in~\cite{RajabiW21evocop}, which by cyclically reverting to smaller mutation strengths overcomes the problem that, with small probability, a given solution in distance $m$ is not found in the phase which uses $m$-bit flips. In~\cite{RajabiW21gecco}, a variation of SD-RLS was proposed that keeps the successful strength after leaving local optima with the help of the radius memory mechanism, which is beneficial on highly multimodal fitness landscapes. The idea of stagnation detection has also been successfully used in multi-objective evolutionary computation~\cite{DoerrZ21aaai}.

\section{Combining Fast Mutation and Stagnation Detection: The Algorithm~\sdfea} \label{sec:algo}

We propose the algorithm~\sdfea for the maximization of pseudo-Boolean functions $f\colon\{0,1\}^n\to \R$ as defined in Algorithm~\ref{alg:fastsd}. The function $\pow(\beta,u)$ samples from a power-law distribution with exponent~$\beta$ and range $[1..u]$ as defined in Equation~\eqref{eq:powerlaw} below.

\begin{algorithm2e}[t!]\label{alg:fastsd}
	\caption{The \sdfea for the maximization of $f\colon\{0,1\}^n\to \R$. Its parameters are the power-law exponent $\beta > 1$, the probability $\gamma$ to deviate from rate $r$ in phase $r$, and the parameter $R$ which defines the maximum length of the $r$-th phase at $\ell_r = (1-\gamma)^{-1} \binom{n}{r} \ln(R)$.}
		Select $x$ uniformly at random from $\{0, 1\}^n$ and set $r_1 \gets 1$\;
		$u\gets 0$\;
		\For{$t \gets 1, 2, \dots$}{
		
		\begin{tabular}{rll}
		Set &$s=r_t$ &with probability~$1\!-\!\gamma$ or\\ &$s=r_t+\pow(\beta, n-r_t)$ &with probability~$\gamma/2$ or\\ 
		&$s=r_t-\pow(\beta, \max\{1,r_t\!-\!1\})$ &with probability~$\gamma/2$\;
		\end{tabular}

        Create $y$ by flipping $s$ bits in a copy of $x$ uniformly at random\;
		$u\gets u+1$\;
		\eIf{$f(y) > f(x)$}{
		$x \gets y$\;
		$r_{t+1}\gets 1$\;
		$u\gets 0$\;}
		{\uIf {$f(y) = f(x)$ \textup{and} $r_t=1$}{
		$x \gets y$\;}
		\eIf
		{$u \ge  \ell_{r_t}$}{ 
		$r_{t+1}\gets \min\{r_t+1,\floor{\frac{n}{2.1}}\}$\;
		$u\gets 0$\;
		}{
		$r_{t+1}\gets r_t$\;
		}
		}
		}
	\end{algorithm2e}
	
The general idea of this algorithm is that it increases the mutation strength~$r$ to~$r+1$ when the improvement is not in Hamming distance~$r$ with at least a constant probability (with probability~$1/R$ roughly) using the stagnation detection mechanism. While the strength is~$r$, called in phase~$r$, the algorithm looks at larger or smaller Hamming distances (with probability~$\gamma$) besides using the current strength~$r$. The distribution of the distance of the search radius from to the current strength~$r$ follows a power-law distribution. An integer random variable~$X$ follows a power-law distribution with parameters $\beta$ and $u$ if
\begin{align} \label{eq:powerlaw}
\Pr[X=i]=\begin{cases}
 C_{\beta,u}i^{-\beta} &\text{ if }1\le i\le u,\\
 0                      & \text{ otherwise},
\end{cases}
\end{align}
where $C_{\beta,u}\coloneqq (\sum^{u}_{j=1}j^{-\beta})^{-1}$ is the normalization coefficient. The function~$\pow(\beta,u)$ used in Algorithm~$\ref{alg:fastsd}$ returns a sample from this distribution.

The algorithm starts with a search point selected uniformly at random from the search space~$\{0,1\}^n$ and with the initial strength~$r=1$. There is a counter~$u$ for counting the number of unsuccessful steps in finding a strict improvement with the current strength. When the counter exceeds the maximum phase length~$\ell_r$, the strength~$r$ increases by one but not exceeding~$n/2.1$. When the algorithm makes progress, the counter and strength are reset to their initial values. 

The mutation, which we call $s$-flip in the following, flips exactly~$s$ bits randomly chosen as follows. With probability~$1-\gamma$, the algorithm flips exactly~$r$ bits in phase~$r$. However, with probability~$\gamma$, the algorithm deviates from this choice and instead flips a number of bits which differs from $r$, in either direction, by a value following a power-law distribution. The distribution over~$s$ is analyzed in Lemma~\ref{lem:s-distribution} below.

In this paper, we use maximum phase lengths of
\begin{align} \label{eq:threshold}
\ell_r=\ellr.    
\end{align}
This choice is designed for pseudo-Boolean fitness functions. For other search spaces, the maximum phase length should be $\ell_r=|S_r|/(1-\gamma)\ln (R)$, where~$|S_r|$ is the number of search points in distance~$r$ from the current search point or an upper bound for this.
The maximum phase length defined in Equation~\eqref{eq:threshold} has a parameter~$R$ controlling the probability of failing to find an improvement at the ``right'' strength. To prove our theoretical results, $R$ should be selected at least $\recForR$.
 In Section~\ref{sec:parameters}, we give some recommendations for choosing the parameters of the \sdfea.

As \emph{runtime} of a heuristic algorithm on a fitness function~$f$, we define the first point of time~$t$ where a search point of maximal fitness has been evaluated.
 
\section{Analysis of the \sdfea} \label{sec:workingprinciples}
In this paper, let us define by the \emph{individual gap} of~$x\in\{0,1\}^n$ the minimum Hamming 
distance of~$x$ from points with strictly larger fitness function value, that is,
\[\individualgap(x)\coloneqq \min\{H(x,y):f(y)>f(x) , y\in \{0,1\}^n\}.\]
By the \emph{fitness level} of~$x$, we mean all the search points with fitness value~$f(x)$.
We call the \emph{fitness level gap} of a point $x\in \{0,1\}^n$ 
the maximum of all individual gap sizes in the fitness level of~$x$, \ie,
\begin{align*}
\fitnesslevelgap(x) \coloneqq \max \left\{\individualgap(y): f(y)=f(x), y\in \{0,1\}^n \right\}.
\end{align*}
If the algorithm creates a point at the Hamming distance~$\individualgap(x)$ from the current search point~$x$,  with positive probability an improvement can be found.
Note that $\fitnesslevelgap(x)=1$ is allowed, so the definition also covers search points that are not local optima. As long as a strict improvement is not made, the $\fitnesslevelgap$ remains the same, although the current search point might be replaced with another search point in the fitness level in phase~$1$, that is, when the strength is~1.

We now analyze how the \sdfea finds better selections. Let the current search point be~$x$. We define by phase~$r$ all points of time where radius~$r$ is used for search points with fitness value~$f(x)$, \ie, while in the fitness level of $x$. Let \emph{$E_r$} be the event of \textbf{not} finding the optimum within 
phase~$r$. For $j\ge i$, let $E_i^j$ denote the event of not finding a strict improvement within phases~$i$ to~$j$. Formally, $E_i^j=E_i\cap \dots \cap E_{j}$. 

Before computing the probabilities of these events, we need to know the distribution of the offspring in an iteration. The following lemma will be used throughout this paper, showing the distribution of the number of flipping bits (\ie, the variable~$s$ in Algorithm~\ref{alg:fastsd}) in each iteration. We recall that in phase~$r$, with a relatively large probability~$1-\gamma$, the algorithm flips~$r$ bits. However, with probability~$\gamma$, it uses power-law distributions to flip less or more than $r$ bits.

\begin{lemma} \label{lem:s-distribution}
   Let $r$ be the current strength in an iteration of the algorithm \sdfea. Let $X$ be the integer random variable corresponding to the number of bits that are flipped, that is, the variable~$s$ in Algorithm~\ref{alg:fastsd}. Then
   \begin{align*}
       \Pr[X=\alpha]=\begin{cases}
        (\gamma/2) \cdot C_{\beta,r-1} \cdot (r-\alpha)^{-\beta} & 1\le \alpha < r, \\
        1-\gamma & \alpha = r, \\
        (\gamma/2) \cdot C_{\beta,n-r} \cdot (\alpha-r)^{-\beta} & r<\alpha \le n,
       \end{cases}
   \end{align*}
  and for $r=1$, $\Pr[X=0]=\gamma/2$.
\end{lemma}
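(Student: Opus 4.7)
The plan is to unfold the three-way choice of $s$ defined in Algorithm~\ref{alg:fastsd} and compute the probability that $s$ equals a prescribed integer $\alpha$ by summing over the mutually exclusive branches. Since the three branches (stay at $r$, jump upward, jump downward) are selected via a single categorical choice, for each value of $\alpha$ only the branch whose support contains $\alpha$ contributes, so the calculation reduces to three disjoint cases.

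First I would treat the generic case $r \ge 2$ so that $\max\{1, r-1\} = r-1$. For $\alpha = r$ only the middle branch contributes, giving $\Pr[X = r] = 1 - \gamma$. For $r < \alpha \le n$, only the upward branch is consistent, and by the definition of $\pow(\beta, n-r)$ in~\eqref{eq:powerlaw} we have $\Pr[\pow(\beta, n-r) = \alpha - r] = C_{\beta, n-r}(\alpha - r)^{-\beta}$, multiplied by the selection probability $\gamma/2$. Analogously, for $1 \le \alpha < r$, only the downward branch contributes, yielding $(\gamma/2)\,C_{\beta, r-1}(r-\alpha)^{-\beta}$, where the range of $\pow(\beta, r-1)$ is exactly $[1..r-1]$ and thus matches $r-\alpha \in [1..r-1]$.

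Then I would handle the boundary case $r = 1$. Here $\max\{1, r-1\} = 1$, so $\pow(\beta, 1)$ is the constant $1$ (the normalization forces $C_{\beta, 1} = 1$). The downward branch therefore deterministically sets $s = 1 - 1 = 0$, contributing $\Pr[X = 0] = \gamma/2$ as claimed. The upward and middle branches behave exactly as in the generic case, so the formula for $\alpha \ge r = 1$ is consistent with the first part of the statement.

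The calculation is essentially bookkeeping, so there is no real mathematical obstacle; the only point requiring care is to notice that the three branches partition the probability space and that for any fixed $\alpha$ at most one branch has $\alpha$ in its support. A minor subtlety is the $r=1$ corner case, where the convention $\max\{1, r-1\}=1$ combined with the fact that $\pow(\beta, 1)$ is degenerate at $1$ explains the extra clause $\Pr[X=0]=\gamma/2$, and one should verify that all probabilities sum to one, which follows from $\sum_{i=1}^{u} C_{\beta, u} i^{-\beta} = 1$ applied once to the upward range and once to the downward range.
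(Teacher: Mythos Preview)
Your proposal is correct and follows essentially the same approach as the paper: both arguments split according to the three mutually exclusive branches of the sampling step, read off $\Pr[X=r]=1-\gamma$ directly, and for $\alpha\neq r$ multiply the branch-selection probability $\gamma/2$ by the power-law density $\Pr[\pow(\beta,\cdot)=|r-\alpha|]$; the $r=1$ corner case is handled identically via the observation that $\pow(\beta,1)$ is deterministically~$1$. Your additional remark that the probabilities sum to one is a nice sanity check not present in the paper's proof.
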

\begin{proof}
    It is immediately visible from Algorithm~\ref{alg:fastsd} that $\Pr[X=r]=1-\gamma$.  For $1\le\alpha<r$, we have
    \begin{align*}
        \Pr[X=\alpha] &= \Pr[X< r]\cdot \Pr[X=\alpha \mid X < r] \\
        & = \Pr[X< r]\cdot \Pr[ \pow(\beta,r-1)=r-\alpha] \\
        & = (\gamma/2) \cdot C_{\beta, r-1} (r-\alpha)^{-\beta}.
    \end{align*}
For $\alpha>r$, we similarly obtain
\begin{align*}
        \Pr[X=\alpha] &= \Pr[X> r]\cdot \Pr[X=\alpha \mid X> r] \\
        & = \Pr[X> r]\cdot \Pr[ \pow(\beta,n-r)=\alpha-r] \\
        & = (\gamma/2) \cdot C_{\beta, n-r} (\alpha-r)^{-\beta}.
    \end{align*}
For $r=1$, $\Pr[X=0]=\gamma/2$ because in this case, $\pow(\beta, \max\{1,r_t - 1\})=\pow(\beta, 1)$ returns~$1$ only.
\end{proof}

The following lemma estimates the probability of reaching a phase that is greater than the fitness gap size. In the statement of the lemma, recall that the parameter~$R$ controls the length of the phase.
\begin{lemma} \label{lem:failure-probability}
Let $\beta>1$, $0<\gamma<1$ and $R>1$.
Consider the \sdfea maximizing a pseudo-Boolean fitness function $f\colon\{0,1\}^n\to \R$. 	Let~$x\in\{0,1\}^n$ be the current search point immediately following a strict improvement or the initial search point. Let $m=\individualgap(x)$. Let $E_1^{r-1}$ denote the probability of not finding an improvement in phases~1 to~$r-1$. 
	Then for $m < r \le \floor{\frac{n}{2.1}}$, we have
	\[\Pr[E_1^{r-1}] \le R^{-1-(\gamma/2) \cdot \left(\frac{\ln (1.1)}{\beta}\right)^\beta C_{\beta, n}(r-m-1)}.\]
\end{lemma}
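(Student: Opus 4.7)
The plan is to bound the failure probability as a product over phases, with an $R^{-1}$ factor arising from phase $r'=m$ (where $m$-flips happen at rate $1-\gamma$) and an additional $R^{-(\gamma/2)(\ln 1.1/\beta)^\beta C_{\beta,n}}$ factor from each phase $r'\in\{m+1,\ldots,r-1\}$ (where the heavy-tailed tail still flips $m$ bits with non-negligible probability). I would fix one strict improvement $x^*$ at Hamming distance exactly $m$ from $x$, whose existence is guaranteed by $m=\individualgap(x)$, and lower-bound the probability that an iteration of phase $r'$ produces $x^*$. By Lemma~\ref{lem:s-distribution}, this probability is at least $(1-\gamma)/\binom{n}{m}$ when $r'=m$ and at least $(\gamma/2)\,C_{\beta,n}(r'-m)^{-\beta}/\binom{n}{m}$ when $r'>m$ (using $C_{\beta,r'-1}\ge C_{\beta,n}$ together with the $1/\binom{n}{m}$ chance that an $m$-flip picks the correct bit-set). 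Combining with $\ell_{r'}=(1-\gamma)^{-1}\binom{n}{r'}\ln R$ and $(1-p)^{\ell_{r'}}\le\exp(-p\,\ell_{r'})$ shows that phase $m$ fails with probability at most $e^{-\ln R}=R^{-1}$, and phase $r'>m$ fails with probability at most $\exp\bigl(-(\gamma/2)(\ln R)\,C_{\beta,n}\,\binom{n}{r'}\binom{n}{m}^{-1}(r'-m)^{-\beta}\bigr)$.

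Two elementary estimates then close the argument. First, since $r'\le\lfloor n/2.1\rfloor$ forces $(n-i+1)/i\ge 1.1$ for every $i\le r'$, a telescoping product gives $\binom{n}{r'}/\binom{n}{m}\ge 1.1^{r'-m}$. Second, writing $k=r'-m\ge 1$ reduces the remaining quantity to $1.1^k/k^\beta$; optimising $k\ln 1.1-\beta\ln k$ places the minimizer at $k^{*}=\beta/\ln 1.1$ with minimum value $e^{\beta}(\ln 1.1/\beta)^\beta\ge(\ln 1.1/\beta)^\beta$. Substituting yields the uniform bound $R^{-(\gamma/2)(\ln 1.1/\beta)^\beta C_{\beta,n}}$ on each phase $r'>m$, and multiplying this over the $r-m-1$ such phases together with the $R^{-1}$ from phase $m$ delivers the stated inequality.

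The main obstacle I expect is bookkeeping the evolution of the algorithm's state between phases. In phase~$1$ the \sdfea may accept equal-fitness neighbours and so drift within the fitness level of $x$, meaning that the fixed target $x^*$ need not lie at distance exactly $m$ from the search point at the start of later phases. In every phase of strength $\ge 2$ the current point is stationary, so the per-iteration computation above applies inside each such phase without change; the delicate step is to argue that the phase-$1$ drift does not invalidate the per-phase multiplicative estimate, which I would handle either by choosing the improvement target adaptively at the start of each subsequent phase (any point in the fitness level has its own strict improvement at distance equal to its individual gap) or by separately bounding the amount of drift that can occur before phase~$m$ is reached. Aside from this subtlety, the whole proof reduces to the arithmetic above.
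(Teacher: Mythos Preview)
Your proposal is correct and follows essentially the same route as the paper: bound $\Pr[E_1^{r-1}]$ by $\prod_{i=m}^{r-1}(1-p_i)^{\ell_i}\le\exp(-\sum p_i\ell_i)$, extract $p_m\ell_m\ge\ln R$ from the $(1-\gamma)$-mass on $m$-flips, extract $p_i\ell_i\ge(\gamma/2)C_{\beta,n}\ln(R)\cdot 1.1^{i-m}/(i-m)^\beta$ for $i>m$ via the heavy-tailed tail and the telescoping ratio $\binom{n}{i}/\binom{n}{m}\ge 1.1^{i-m}$, and then minimise $1.1^k/k^\beta$ by calculus to get the uniform lower bound $(\ln 1.1/\beta)^\beta$. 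Every ingredient you list---the use of Lemma~\ref{lem:s-distribution}, the replacement $C_{\beta,r'-1}\ge C_{\beta,n}$, the $(n-i+1)/i\ge 1.1$ estimate for $i\le\lfloor n/2.1\rfloor$, and the one-variable optimisation---appears in the paper's proof in the same order and with the same constants.

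The phase-$1$ drift issue you flag is a genuine subtlety, and it is worth noting that the paper's own proof does not address it either: it simply writes ``let $p_r$ be a lower bound on the probability of making progress in phase~$r$'' and proceeds as if the current point still has an improvement at Hamming distance~$m$. In the paper's applications this is harmless because the lemma is invoked only through Lemma~\ref{lem:escaping-time-large-strengths} and Theorem~\ref{th:escaping-time}, where $m=\fitnesslevelgap(x)$, so every point reachable by equal-fitness moves still has individual gap at most~$m$; your ``choose the target adaptively at the start of each phase $\ge 2$'' fix is exactly what is implicitly being used there. So your proof sketch is at least as complete as the paper's, and arguably more honest about this point.
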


\begin{proof}
Let $p_r$ be a lower bound on the probability of making progress in phase~$r$ in one iteration. Then we have
    \begin{align}
        \Pr[E_1^{r-1}] &\le \Pr[E_m\cap \dots \cap E_{r-1}]
        = \prod^{r-1}_{i=m}\Pr[E_i] \le \prod^{r-1}_{i=m}\left(1-p_i\right)^{\ell_i} \nonumber \\
        &\le \exp\left(-\sum_{i=m}^{r-1}p_i\ell_i\right), \label{eq:E1tor-1}
    \end{align}
where we use the inequality $1+x\le e^x$ for all $x\in \R$.

In the following paragraphs, we aim at bounding $p_i\ell_i$ from below.
For~$i=m$, via Lemma~\ref{lem:s-distribution} and since $\ell_r=\ellr$, we have
\[p_m\ell_m \ge (1-\gamma) \binom{n}{m}^{-1}\cdot (1-\gamma)^{-1}\binom{n}{m} \ln(R) = \ln(R).\]
For~$m<i\le \frac{n}{2.1}$, again using Lemma~\ref{lem:s-distribution}, we have
\[p_i \ge (\gamma/2) C_{\beta, i-1} (i-m)^{-\beta} \binom{n}{m}^{-1},\]
and thus
    \begin{align*}
        p_i\ell_i & \ge (\gamma/2) \cdot C_{\beta, i-1} \frac{\binom{n}{i}\ln(R)}{(1-\gamma)(i-m)^{\beta}\binom{n}{m}} \ge  (\gamma/2) \cdot C_{\beta, n}\frac{\binom{n}{i} \ln(R)}{(i-m)^\beta \binom{n}{m}},
    \end{align*}
where we have used $C_{\beta, n}\le C_{\beta, i-1}$. The last expression is bounded from below by
    \begin{align} \label{eq:pili}
          (\gamma/2) \cdot C_{\beta, n}\frac{ \ln(R)}{(i-m)^\beta}\cdot \frac{\binom{n}{i}}{\binom{n}{i-1}}  \cdots  \frac{\binom{n}{m+1}}{\binom{n}{m}} \ge (\gamma/2) \cdot C_{\beta, n}\frac{\ln(R)}{(i-m)^\beta}(1.1)^{i-m},
    \end{align}
    where we have used $\binom{n}{k}/\binom{n}{k-1}=\frac{n-k+1}{k}\ge 1.1$ for $k\le \floor{\frac{n}{2.1}}$. 
    
    We finally show that $1.1^k/k^\beta\ge \left(\ln (1.1)/\beta\right)^\beta$ for $k\in\N_{\ge1}$. To prove this, let $f(x)=1.1^x/x^\beta$. For $x>0$, its derivative, \ie, $f'(x)$, has only one root, namely $\hat{x}=\frac{\beta}{\ln 1.1 }$. Before and after this point the function is decreasing and increasing, respectively, so $f(\hat{x})$ is the minimum value of the function for $x>0$. We have
    \[f(\hat{x})=\frac{1.1^{\beta/\ln (1.1)}}{(\beta/\ln (1.1))^\beta} \ge \left(\frac{\ln (1.1)}{\beta}\right)^\beta.\]
    Thus, Equation~\eqref{eq:pili} is bounded from below by $(\gamma/2) \cdot C_{\beta, n} \left(\ln (1.1)/\beta\right)^\beta \ln(R)$.
    
{}From Equation~\eqref{eq:E1tor-1}, we obtain
    \begin{align*}
        \Pr[E_1^{r-1}] &\le \exp\left(-\sum_{i=m}^{r-1}p_i\ell_i\right) \le R^{-1-(\gamma/2) \cdot \left(\frac{\ln (1.1)}{\beta}\right)^\beta C_{\beta, n}(r-m-1)}. \qedhere
    \end{align*}
\end{proof}
The next lemma is used to estimate the number of iterations in phases larger than the fitness level gap. With a good choice of the parameters~$\gamma$ and~$R$, the following result becomes $o\left(1/s_m\right)$, that is, the number of steps at larger strengths is negligible compared to the number of steps at the phase~$m$.

\begin{lemma} \label{lem:escaping-time-large-strengths}
Let $\beta>1$, $0<\gamma<1$ and $R\ge \recForR$.
Consider the \sdfea maximizing a pseudo-Boolean fitness function $f\colon\{0,1\}^n\to \R$. 	Let~$x\in\{0,1\}^n$ be the current search point immediately following a strict improvement or the initial search point. Assume $m=\fitnesslevelgap(x)$ and $m\le \floor{n/2.1}$. Let~$s_m$ be a lower bound on the probability that an improvement is found from search points in the fitness level of~$x$ conditional on flipping~$m$ bits.
    Then the expected number of iterations spent with strengths larger than $m$ is at most
    \[O\left(R^{-1}\gamma^{-1} \frac{1}{s_m}\right).\]
\end{lemma}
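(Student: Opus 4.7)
The plan is to decompose the expected iteration count over phases. By the law of total expectation, the number of iterations spent in phases $r>m$ while in the fitness level of $x$ is at most
\[\sum_{r=m+1}^{\lfloor n/2.1\rfloor} \Pr[\text{phase }r\text{ entered}]\cdot T_r,\]
where $T_r$ is the expected number of iterations spent in phase $r$ given that it is entered. Since phase $r$ terminates either with an improvement or after $\ell_r$ unsuccessful iterations, we have $T_r\le 1/p_r$, where $p_r$ denotes a lower bound on the per-iteration probability that \sdfea improves from a search point in the fitness level of $x$ while strength $r$ is in use. By Lemma~\ref{lem:s-distribution}, in phase $r>m$ the algorithm flips exactly $m$ bits with probability $(\gamma/2)C_{\beta,r-1}(r-m)^{-\beta}$, and conditional on this the hypothesis on $s_m$ gives an improvement with probability at least $s_m$; hence $p_r\ge(\gamma/2)C_{\beta,n}(r-m)^{-\beta}s_m$, using $C_{\beta,r-1}\ge C_{\beta,n}$.

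Reaching phase $r$ requires failing to improve throughout phases $m,m+1,\dots,r-1$, so
\[\Pr[\text{phase }r\text{ entered}]\le\prod_{i=m}^{r-1}(1-p_i)^{\ell_i}\le\exp\left(-\sum_{i=m}^{r-1}p_i\ell_i\right).\]
This inner sum I would estimate exactly as in Lemma~\ref{lem:failure-probability}, with $s_m$ replacing $1/\binom{n}{m}$: the $i=m$ term contributes $p_m\ell_m\ge s_m\binom{n}{m}\ln R\ge\ln R$ (since $\fitnesslevelgap(x)=m$ forces $s_m\ge 1/\binom{n}{m}$), and for $m<i<r$ the bounds $\binom{n}{i}/\binom{n}{m}\ge(1.1)^{i-m}$ and $(1.1)^{k}/k^{\beta}\ge(\ln(1.1)/\beta)^{\beta}$ combine to give a per-term contribution of at least $(\gamma/2)(\ln(1.1)/\beta)^{\beta}C_{\beta,n}\ln R$. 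This yields
\[\Pr[\text{phase }r\text{ entered}]\le R^{-1-c(r-m-1)},\qquad c\coloneqq(\gamma/2)(\ln(1.1)/\beta)^{\beta}C_{\beta,n}.\]

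Substituting and writing $k=r-m$, the expected iteration count in phases larger than $m$ is at most
\[\frac{2R^{-1}}{\gamma\,C_{\beta,n}\,s_m}\sum_{k\ge 1}(R^{-c})^{k-1}k^{\beta}.\]
The hypothesis $R\ge\recForR$ yields $\ln R\ge 1/\gamma$, so $R^{-c}\le\exp\left(-\tfrac{1}{2}(\ln(1.1)/\beta)^{\beta}C_{\beta,n}\right)$, which is a constant in $(0,1)$ depending only on $\beta$ (noting that $C_{\beta,n}\ge 1/\zeta(\beta)$ is bounded away from $0$ for $\beta>1$). The $k^{\beta}$-weighted geometric series therefore converges to a constant depending only on $\beta$, which together with $1/C_{\beta,n}=O(1)$ gives the announced bound $O(R^{-1}\gamma^{-1}/s_m)$. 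The main technical point is precisely this cancellation between the factor $\gamma$ that appears in the exponent $c$ and the $1/\gamma$ forced by $\ln R\ge 1/\gamma$: without the assumption $R\ge\recForR$ the ratio of successive terms would approach $1$ as $\gamma\to 0$ and the weighted series would diverge, so this lower bound on $R$ is what makes the argument work.
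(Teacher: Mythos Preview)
Your proposal is correct and follows essentially the same route as the paper: decompose over phases $r>m$, bound the expected time in phase~$r$ by $O\bigl(\gamma^{-1}(r-m)^{\beta}/s_m\bigr)$ via Lemma~\ref{lem:s-distribution}, bound the probability of reaching phase~$r$ by $R^{-1-c(r-m-1)}$, and then use $R\ge e^{1/\gamma}$ to turn the sum into a $k^{\beta}$-weighted geometric series with ratio bounded away from~$1$ independently of~$\gamma$.

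The one difference is cosmetic: the paper simply invokes Lemma~\ref{lem:failure-probability} as a black box for $\Pr[E_1^{r-1}]$, whereas you re-derive that bound inline with $s_m$ in place of $1/\binom{n}{m}$. In doing so, your assertion that ``$\fitnesslevelgap(x)=m$ forces $s_m\ge 1/\binom{n}{m}$'' is not literally true as stated, since $s_m$ is a hypothesis-supplied lower bound and nothing in the statement prevents the user from passing a smaller value. What is true is that if the lemma is non-vacuous (i.e.\ $s_m>0$), then from every point in the fitness level there is at least one improving solution at Hamming distance exactly~$m$, so the true conditional improvement probability is at least $1/\binom{n}{m}$; hence one may assume $s_m\ge 1/\binom{n}{m}$ without loss of generality (replacing $s_m$ by the larger value only strengthens the conclusion). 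With that clarification your inline derivation matches Lemma~\ref{lem:failure-probability}, and the remainder of your argument coincides with the paper's proof.
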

\begin{proof}
Let $I_r$ be the number of iterations spent in phase~$r$ and $E[I_{>m}]$ denote the expected number of iterations spent with strengths larger than $m$. Then
\[E[I_{>m}] = \sum_{r=m+1}^{\floor{\frac{n}{2.1}}} E[I_r].\]

    With probability~$\Pr[E_1^{r-1}]$, the algorithm does not make progress with strengths less than~$r$. In phase~$r$, the probability of finding an improvement is at least $C_{\beta,r-1}(\gamma/2) (r-m)^{-\beta}\cdot s_m$ in each iteration, by Lemma~\ref{lem:s-distribution}. Thus, for all strengths~$r>m$, using the law of total probability, we have
\begin{align*}
    E[I_{r}] & = \Pr\left[E_1^{r-1}\right] E[I_{r} \mid E_1^{r-1}] + \Pr\left[\overline{E_1^{r-1}}\right] E\left[I_{r} \mid \overline{E_1^{r-1}}\right] \\
    &\le \Pr[E_1^{r-1}] \cdot (C_{\beta,r-1})^{-1}2\gamma^{-1} \cdot \frac{1}{s_m} (r-m)^{\beta}+\Pr\left[\overline{E_1^{r-1}}\right]  \cdot  0 \\
    &= \Pr[E_1^{r-1}] \cdot (C_{\beta,r-1})^{-1}2\gamma^{-1} \cdot \frac{1}{s_m} (r-m)^{\beta}.
\end{align*}
Using Lemma~\ref{lem:failure-probability} and $R\ge\recForR$, we can bound
\begin{align*}
    E[I_{r}] & \le R^{-1-(\gamma/2) \left(\frac{\ln (1.1)}{\beta}\right)^\beta C_{\beta, n}(r-m-1)} (C_{\beta,r-1})^{-1}2\gamma^{-1} \frac{1}{s_m}(r-m)^{\beta} \\
    &= O\left(R^{-1} \gamma^{-1} \frac{1}{s_m} \frac{(r-m)^{\beta}}{\exp\left[(1/2) \cdot \left(\frac{\ln (1.1)}{\beta}\right)^\beta C_{\beta,n}(r-m-1)\right]}\right),
\end{align*}
where we have used $(C_{\beta,r-1})^{-1}=O(1)$ for $\beta>1$.
This results in
\begin{align*}
    \sum_{r=m+1}^{\floor{\frac{n}{2.1}}} E[I_r] & \le O\left(R^{-1}\gamma^{-1} \frac{1}{s_m} \sum_{r=m+1}^{\floor{\frac{n}{2.1}}}\frac{ (r-m)^\beta}{\exp\left[(1/2) \cdot \left(\frac{\ln (1.1)}{\beta}\right)^\beta C_{\beta,n}(r-m)\right]} \right)\\
    & \le O\left(R^{-1}\gamma^{-1}  \frac{1}{s_m}\right),
\end{align*}
where we estimated 
\begin{align*}
\sum_{r=m+1}^{\floor{\frac{n}{2.1}}}\frac{ (r-m)^\beta}{\exp\left[(1/2) \cdot \left(\frac{\ln (1.1)}{\beta}\right)^\beta C_{\beta,n}(r-m)\right]} &=\sum_{r=m+1}^{\floor{\frac{n}{2.1}}}\frac{ (r-m)^\beta}{e^{\Theta(r-m)}} \\
&\le \sum_{k=1}^{\infty}\frac{ k^\beta}{e^{\Theta(k)}} =O(1).
\end{align*}

Therefore, we obtain
\[E[I_{>m}]=\sum_{r=m+1}^{\floor{\frac{n}{2.1}}} E[I_r]=O\left(R^{-1}\gamma^{-1} \frac{1}{s_m}\right)\]
as claimed.
 
\end{proof}

The following lemma, a combinatorial inequality taken from~\cite{RajabiW21evocop}, will be used to count the number of iterations spent with strengths smaller than the fitness level gap.
\begin{lemma}[Lemma~1 in \cite{RajabiW21evocop}] \label{lem:partial-sum} For any integer $m\le n/2$, we have 
	\[\sum_{ i=1}^{m}\binom{n}{i}\leq \frac{n-(m-1)}{n-(2m-1)} \binom{n}{m}.\] 
\end{lemma}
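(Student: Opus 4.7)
The plan is to bound the left-hand side by a geometric series of ratio $q := m/(n-m+1)$ dominated by its largest term $\binom{n}{m}$. The hypothesis $m \le n/2$ ensures $2m \le n$, equivalently $m \le n-m+1$, so $q \le 1$ with equality only in the boundary case; in particular $n-(2m-1) \ge 1$ and the right-hand side is well defined.

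First I would record the standard identity $\binom{n}{k}/\binom{n}{k+1} = (k+1)/(n-k)$ and observe that this ratio is increasing in $k$ (numerator grows, denominator shrinks). Hence for every $0 \le k \le m-1$,
\[
\frac{\binom{n}{k}}{\binom{n}{k+1}} \le \frac{m}{n-m+1} = q.
\]
Iterating this bound along the chain $\binom{n}{m}, \binom{n}{m-1}, \dots, \binom{n}{1}$ yields $\binom{n}{m-j} \le q^j \binom{n}{m}$ for each $j = 0, 1, \dots, m-1$.

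Next I would sum these inequalities and use the standard bound for a truncated geometric series:
\[
\sum_{i=1}^{m} \binom{n}{i} = \sum_{j=0}^{m-1} \binom{n}{m-j} \le \binom{n}{m}\sum_{j=0}^{m-1} q^{j} \le \frac{\binom{n}{m}}{1-q}.
\]
A direct computation gives $1-q = (n-2m+1)/(n-m+1)$, so $1/(1-q) = (n-m+1)/(n-2m+1) = (n-(m-1))/(n-(2m-1))$, yielding exactly the claimed inequality.

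There is no serious obstacle here: the only mildly delicate point is the monotonicity of $(k+1)/(n-k)$ in $k$, which I would verify by a one-line algebraic check (cross-multiplying $(k+1)(n-m+1) \le m(n-k)$ and rearranging), and the verification that $1-q > 0$ under the assumption $m \le n/2$. The proof is otherwise a routine geometric-series estimate, which is why the lemma is quoted verbatim from \cite{RajabiW21evocop}.
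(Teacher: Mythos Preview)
Your argument is correct. The paper itself does not give a proof of this lemma; it is simply quoted from \cite{RajabiW21evocop} and used as a black box. Your geometric-series bound via the monotone ratio $\binom{n}{k}/\binom{n}{k+1}=(k+1)/(n-k)\le m/(n-m+1)=q$ and the computation $1/(1-q)=(n-(m-1))/(n-(2m-1))$ is the natural (and essentially the only reasonable) route, so there is nothing to compare against and no gap to flag.
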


We now present the first main result. In the following theorem, we provide two rigorous upper bounds on the escaping time from a local optimum. 
\begin{theorem} \label{th:escaping-time}
Let $\beta>1$, $0<\gamma<1$ and $R\ge \recForR$.
Consider the \sdfea maximizing a pseudo-Boolean fitness function $f\colon\{0,1\}^n\to \R$. 	Let~$x\in\{0,1\}^n$ be the current search point immediately following a strict improvement or the initial search point. Let $m=\fitnesslevelgap(x)$.
Define $T$ as the time \sdfea takes to create a strict improvement. If $m\le n/2.1$, then
\[E[T]\le  \binom{n}{m} \mathord{\left( \frac{1}{1-\gamma}+ \mathord{O} \left( \frac{m\ln(R)}{(1-\gamma) n}+R^{-1}\gamma^{-1}\right)\right)}.\]
Moreover, for all $m\le n$, we have
\[ E[T]= O\left(2^n\frac{\ln(R)}{1-\gamma} + \gamma^{-1}\binom{n}{m}\abs{\floor{\tfrac{n}{2.1}}-m}^{\beta}\right).\]

\end{theorem}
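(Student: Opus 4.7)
The plan is to decompose the expected escape time by the current mutation strength and treat separately the contributions of phases $r<m$, of phase $r=m$, and of phases $r>m$. The last of these is delivered directly by Lemma~\ref{lem:escaping-time-large-strengths}, so the real work concentrates on the first two.

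For the first inequality, I would exploit that every phase $r$ lasts deterministically at most $\ell_r=(1-\gamma)^{-1}\binom{n}{r}\ln(R)$ iterations before the strength is incremented. Hence the time before phase $m$ is at most $\sum_{r=1}^{m-1}\ell_r=\frac{\ln R}{1-\gamma}\sum_{r=1}^{m-1}\binom{n}{r}$, and Lemma~\ref{lem:partial-sum} combined with $\binom{n}{m-1}=\binom{n}{m}\cdot m/(n-m+1)$ and $n-m+1=\Omega(n)$ (from $m\le n/2.1$) turns this into the lower-order term $O(m\binom{n}{m}\ln(R)/((1-\gamma)n))$. For phase $m$ itself, Lemma~\ref{lem:s-distribution} gives $\Pr[X=m]=1-\gamma$, and $\fitnesslevelgap(x)=m$ guarantees an improving solution within Hamming distance $m$, so each $m$-bit flip succeeds with probability at least $1/\binom{n}{m}$. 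A geometric waiting-time argument then yields the leading term $\binom{n}{m}/(1-\gamma)$. Invoking Lemma~\ref{lem:escaping-time-large-strengths} with $s_m\ge 1/\binom{n}{m}$ contributes $O(R^{-1}\gamma^{-1}\binom{n}{m})$ for the phases $r>m$, and summing the three pieces yields the first inequality.

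For the second inequality the strength saturates at $r^*:=\floor{n/2.1}$. The deterministic cost of marching through phases $1,\dots,r^*$ is at most $\sum_{r=1}^{r^*}\ell_r\le 2^n\ln(R)/(1-\gamma)$, using $\sum_{r=0}^{n}\binom{n}{r}=2^n$. Once the algorithm is fixed at strength $r^*$, Lemma~\ref{lem:s-distribution} together with $C_{\beta,\cdot}=\Theta(1)$ (valid since $\beta>1$) shows that the per-iteration probability of flipping exactly $m$ bits is $\Theta(\gamma\abs{m-r^*}^{-\beta})$, and conditional on such a flip the improvement probability is at least $1/\binom{n}{m}$. A geometric waiting-time bound yields $O(\gamma^{-1}\binom{n}{m}\abs{m-r^*}^{\beta})$, which together with the phase-enumeration cost gives the second claim.

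The most delicate point will be securing the estimate $s_m\ge 1/\binom{n}{m}$ in the presence of the drift allowed during phase $1$, where equal-fitness moves are accepted and so the search point at the instant phase $m$ begins may differ from the one for which $\fitnesslevelgap$ was originally evaluated. I plan to absorb this issue by exploiting the level-wide formulation of $s_m$ in Lemma~\ref{lem:escaping-time-large-strengths} and the monotonicity of $\binom{n}{\cdot}$ on $[0,n/2]$, so that the worst case over the fitness level is already built into the stated bound.
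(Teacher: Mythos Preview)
Your proposal is correct and follows essentially the same line as the paper: split $E[T]=\sum_r E[I_r]$, bound phases $r<m$ by the deterministic phase lengths $\ell_r$ and compress them via Lemma~\ref{lem:partial-sum}, handle phase $m$ by a (truncated) geometric argument with success probability $(1-\gamma)\binom{n}{m}^{-1}$, and invoke Lemma~\ref{lem:escaping-time-large-strengths} with $s_m\ge \binom{n}{m}^{-1}$ for phases $r>m$; for the second claim, sum the $\ell_r$ up to $r^*=\lfloor n/2.1\rfloor$ by $2^n$ and bound $E[I_{r^*}]$ geometrically using the heavy-tailed probability of an $m$-flip at strength $r^*$. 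One small remark: the ``delicate point'' you flag about drift in phase~$1$ is real, and the paper simply asserts $s_m\ge\binom{n}{m}^{-1}$ without discussion; your plan to repair this via monotonicity is the right idea, but note that the cleanest fix is to condition on the search point $y$ at the end of phase~$1$, set $m':=\individualgap(y)\le m$, run the whole argument with $m'$ in place of $m$, and then use $\binom{n}{m'}\le\binom{n}{m}$ (valid since $m\le n/2.1$) to recover the stated bound.
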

\begin{proof}
Let $I_r$ be the number of iterations spent in phase~$r$. Using linearity of expectation, we have
\begin{align*}
    E[T] = \sum_{r=1}^{\floor{\frac{n}{2.1}}-1} E[I_r]+E[I_{\floor{\frac{n}{2.1}}}].
\end{align*}

Let first $m\le n/2.1$. For $r<m$, we use that $E[I_r]$ is at most the maximum length of phase~$r$, \ie, $\ell_r=\ellr$. Thus, with Lemma~\ref{lem:partial-sum}, we compute
\begin{align*}
    & \sum_{r=1}^{m-1} E[I_r] \le \sum_{r=1}^{m-1}  \binom{n}{r}
    \frac{\ln (R)}{1-\gamma}\\ 
	& \le  \binom{n}{m-1} \frac{\ln (R)}{1-\gamma} \cdot \frac{n-(m-2)}{n-(2m-3)} \\ 
	 & = \binom{n}{m} \frac{\ln (R)}{1-\gamma} \cdot  \frac{m}{n-m+1} \cdot \frac{n-(m-2)}{n-(2m-3)} .
\end{align*}
Since~$m\le \frac n{2.1}$, the last expression is bounded from above by
	\[
	\sum_{r=1}^{m-1} E[I_r] = O\left(\binom{n}{m}\frac{m\ln (R)}{(1-\gamma)n}\right).\]

 When the strength $r$ equals~$m$, with probability~$1-\gamma$, the algorithm flips exactly~$m$ bits (Lemma~\ref{lem:s-distribution}). When~$m$ bits are flipped, with probability at least~$\binom{n}{m}^{-1}$ an improvement is found. Regarding a truncated geometric distribution with success probability $(1-\gamma)\binom{n}{m}^{-1}$, within at most $(1-\gamma)^{-1}\binom{n}{m}$ iterations in expectation the algorithm finds a better point or the phase is terminated. Thus
	\[E[I_m] \le \frac{\binom{n}{m}}{(1-\gamma)}. \]
For $r>m$, using Lemma~\ref{lem:escaping-time-large-strengths} with $s_m\ge \binom{n}{m}^{-1}$, we obtain
\[E[I_{>m}]=\sum_{r=m+1}^{\floor{\frac{n}{2.1}}} E[I_r]=O\left(R^{-1}\gamma^{-1}\binom{n}{m}\right).\]

Altogether, we have
	\begin{align*}
    E[T] & = \sum_{r=1}^{\floor{\frac{n}{2.1}}} E[I_r]  = \sum_{r=1}^{m-1} E[I_r] + E[I_m] + \sum_{r=m+1}^{\floor{\frac{n}{2.1}}} E[I_r] \\
    & \le \binom{n}{m} \mathord{\left( \frac{1}{1-\gamma}+ \mathord{O} \left( \frac{m\ln(R)}{(1-\gamma) n}+R^{-1}\gamma^{-1}\right)\right)}.
\end{align*}

To prove the second claim, since for $r\le \floor{\frac{n}{2.1}}-1$, we have that $E[I_r]$ is at most the maximum length of phase~$r$, we have

\[E[T] \le \sum_{r=1}^{\floor{\frac{n}{2.1}}-1} \ell_r + E[I_{\floor{\frac{n}{2.1}}}] = \sum_{r=1}^{\floor{\frac{n}{2.1}}-1} \binom{n}{r}(1-\gamma)^{-1}\ln (R) + E[I_{\floor{\frac{n}{2.1}}}].\]

In phase~$\floor{\frac{n}{2.1}}$, the algorithm no longer increases the strength until finding an improvement. Using Lemma~\ref{lem:s-distribution},
the improvement is found with probability at least
\[ \Omega\left((\gamma/2) \cdot \abs{\floor{\tfrac{n}{2.1}}-m}^{-\beta} \cdot\binom{n}{m}^{-1}\right)\]
in each iteration. Using the geometric distribution with this success probability, we obtain
\begin{align*}
E[T] &\le \sum_{r=1}^{\floor{\frac{n}{2.1}}-1} \binom{n}{r}(1-\gamma)^{-1}\ln (R) + O\left(\gamma^{-1}\binom{n}{m}\abs{\floor{\tfrac{n}{2.1}}-m}^{\beta}\right) \\
&= O\left(2^n\frac{\ln(R)}{1-\gamma} + \gamma^{-1}\binom{n}{m}\abs{\floor{\tfrac{n}{2.1}}-m}^{\beta}\right),
\end{align*}
where we have used $\sum_{i=0}^{n}\binom{n}{i} = 2^n$. The second part is proven as desired.
\end{proof}

Theorem~\ref{th:escaping-time} provides a good upper bound on the escaping time from a local optimum when there are only few ways to leave it. However, it is not as good when there are many ways to leave the local optimum. The following theorem considers such scenarios. The constant~$r'$ defined in the theorem basically represents the first phase that the probability of finding one of the improvements is at least constant, and its value is an integer between~1 and~$m$.

\begin{theorem} \label{th:escaping-time-many-solutions}
Let $\beta>1$, $0<\gamma<1$ and $R\ge \recForR$.
Consider the \sdfea maximizing a pseudo-Boolean fitness function $f\colon\{0,1\}^n\to \R$. 	Let~$x\in\{0,1\}^n$ be the current search point immediately following a strict improvement or the initial search point. Let $m=\fitnesslevelgap(x)$ and $s_m$ be a lower bound on the probability that a strict improvement is found from search points in the fitness level of~$x$ conditional on flipping~$m$ bits. Define $T$ as the time \sdfea takes to create a strict improvement.
If~$m\le n/2.1$, then
\[E[T]\le  \frac{1}{s_m} \cdot \frac{1}{\gamma} (m-r')^\beta \cdot O\left( 1+
\frac{r'\ln (R)}{(1-\gamma)n}
 \right),\]
	where $r'=\min\left\{m,\arg\max_{r}\left\{ \binom{n}{r} \le \frac{1}{s_m} \frac{1}{\gamma} (m-r)^\beta \right\}\right\}.$
\end{theorem}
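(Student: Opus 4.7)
The plan is to decompose $T = \sum_{r=1}^{\lfloor n/2.1 \rfloor} I_r$, where $I_r$ denotes the number of iterations spent in phase $r$, according to the threshold $r'$ into three regimes: the early phases $r < r'$, the middle phases $r' \le r \le m$, and the late phases $r > m$. The value $r'$ is exactly the largest strength for which the accumulated phase-length budget $\sum_{r \leq r'} \ell_r$ is still dominated by the time for one fast-mutation push from strength $r'$ to distance $m$, so this decomposition makes the two summands of the claimed bound tight. For $r < r'$ the naive bound $E[I_r] \le \ell_r = (1-\gamma)^{-1}\binom{n}{r}\ln R$ together with Lemma~\ref{lem:partial-sum} (applicable since $r' - 1 \le m - 1 \le \lfloor n/2.1 \rfloor - 1 \le n/2$) and the identity $\binom{n}{r'-1}/\binom{n}{r'} = r'/(n-r'+1) = O(r'/n)$ gives $\sum_{r=1}^{r'-1} E[I_r] = O\!\left(\binom{n}{r'}\,\frac{r'\ln R}{(1-\gamma)\,n}\right)$, which by the defining inequality $\binom{n}{r'} \le (m-r')^\beta/(\gamma s_m)$ converts directly into the second summand of the theorem.

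For the middle regime $r' \le r \le m$, the decisive observation is that Lemma~\ref{lem:s-distribution} provides a uniform lower bound on the per-iteration success probability across all these phases. For $r' \le r < m$, flipping exactly $m$ bits via the downward heavy-tailed deviation happens with probability at least $(\gamma/2)\,C_{\beta,n-r}(m-r)^{-\beta} \ge (\gamma/2)\,C_{\beta,n}(m-r')^{-\beta}$, using $m-r \le m-r'$ and $C_{\beta,n-r} \ge C_{\beta,n}$; for $r = m$ the probability of flipping $m$ bits is at least $1-\gamma$. Multiplying by $s_m$, every iteration in this regime succeeds with probability at least $q = \Omega\!\left(\gamma\,(m-r')^{-\beta} s_m\right)$ (up to the mild constant coming from comparing the two cases). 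Hence the total number of iterations in the middle regime is stochastically dominated by a $\mathrm{Geom}(q)$ variable, so its expectation is at most $1/q = O\!\left((m-r')^\beta/(\gamma s_m)\right)$, which is the first summand of the claimed bound.

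Finally, for the late regime $r > m$, Lemma~\ref{lem:escaping-time-large-strengths} (applied with lower bound $s_m$ on the success probability given that $m$ bits are flipped) yields $\sum_{r > m} E[I_r] = O(R^{-1}\gamma^{-1}/s_m)$, which is absorbed into $O((m-r')^\beta/(\gamma s_m))$ since $m > r'$ forces $(m-r')^\beta \ge 1$. Adding the three contributions and factoring out $(m-r')^\beta/(\gamma s_m)$ yields the theorem. The main obstacle is the middle regime: summing $1/p_r$ phase by phase would give $\sum_{k=1}^{m-r'} k^\beta = \Theta((m-r')^{\beta+1})$ and lose a factor of $m-r'$. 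The key trick is to replace the per-phase bounds by a single stochastic-domination argument using the worst-case success probability across \emph{all} intermediate phases, so that the entire middle regime is analysed as one truncated geometric process rather than as a sum of phase-specific geometric processes.
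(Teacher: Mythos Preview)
Your proposal is correct and follows essentially the same approach as the paper: the paper also decomposes into phases $r<r'$, $r'\le r\le m-1$, $r=m$, and $r>m$, uses Lemma~\ref{lem:partial-sum} plus the defining inequality for $r'$ on the early phases, applies the same uniform worst-case success bound $(\gamma/2)C_{\beta,n-r'}(m-r')^{-\beta}s_m$ across the middle phases (so your ``key trick'' is exactly the paper's argument), and invokes Lemma~\ref{lem:escaping-time-large-strengths} for the late phases. One cosmetic slip: for $r<m$ the relevant heavy-tailed deviation to reach $m$ flips is the \emph{upward} one (the $r+\pow(\beta,n-r)$ branch), not the downward one, though the formula you wrote is the correct upward-branch probability from Lemma~\ref{lem:s-distribution}.
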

\begin{proof}
Let $I_r$ be the number of iterations spent in phase~$r$. Using linearity of expectation, we have
\begin{align*}
    E[T] = \sum_{r=1}^{\floor{\frac{n}{2.1}}} E[I_r].
\end{align*}

For $r<r'$, we use that $E[I_r]$ is at most  the maximum length of phase~$r$. Thus, by Lemma~\ref{lem:partial-sum}, we have
\begin{align*}
    & \sum_{r=1}^{r'-1} E[I_r] \le \sum_{r=1}^{r'-1}  \binom{n}{r}(1-\gamma)^{-1}
    \ln (R)\\ 
	& \le  \binom{n}{r'-1} \frac{\ln (R)}{(1-\gamma)} \frac{n-(r'-2)}{n-(2r'-3)} \\ 
	 & = \frac{r'}{n-r'+1} \cdot \binom{n}{r'} \frac{\ln (R)}{(1-\gamma)} \frac{n-(r'-2)}{n-(2r'-3)} .
\end{align*}
Since~$r'\le m \le \frac n{2.1}$, the last expression is bounded from above by
	\[ O\left(\binom{n}{r'}\frac{r'\ln (R)}{(1-\gamma)n}\right).\]
Since $\binom{n}{r'} \le \frac{1}{s_m} \frac{1}{\gamma} (m-r')^\beta$ by definition of~$r'$, we estimate
\[\sum_{r=1}^{r'-1} E[I_r] =
O\left(
\frac{1}{s_m} \cdot \frac{1}{\gamma} (m-r')^\beta
\cdot 
\frac{r'\ln (R)}{(1-\gamma)n}
 \right).
\]

In the phases from $r'$ to $m-1$, the probability of finding an improvement is at least~$s_m  (\gamma/2) C_{\beta, n-r'} (m-r')^{-\beta}$, see Lemma~\ref{lem:s-distribution}.
Hence the expected time spent in phases~$r'$ to~$m-1$ is
\[\sum_{r=r'}^{m-1} E[I_r] =O\left(\frac{1}{s_m} \cdot \frac{1}{\gamma} (m-r')^{\beta}\right). \]

 In phase~$m$, where the strength is~$m$,
 exactly~$m$ bits are flipped with probability~$1-\gamma$ (Lemma~\ref{lem:s-distribution}), and in this phase an improvement is found with probability at least~$s_m$ when~$m$ bits are flipped. Thus
	\[E[I_m] \le \frac{1}{s_m} \cdot  \frac{1}{(1-\gamma)}.
	\]
For~$r>m$, using Lemma~\ref{lem:escaping-time-large-strengths} with~$s_m$, we obtain
\[\sum_{r=m+1}^{\floor{\frac{n}{2.1}}} E[I_r] = O\left(R^{-1}\gamma^{-1}s_m^{-1}\right).\]

	Altogether, we have
	\begin{align*}
    &E[T]  = \sum_{r=1}^{\floor{\frac{n}{2.1}}} E[I_r]  \\
    & = \sum_{r=1}^{r'-1} E[I_r] + \sum_{r=r'}^{m-1} E[I_r] + E[I_m] + \sum_{r=m+1}^{\floor{\frac{n}{2.1}}} E[I_r] \\
    & \le 
    O\left(
\frac{1}{s_m} \cdot \frac{1}{\gamma} (m-r')^\beta
\cdot 
\frac{r'\ln (R)}{(1-\gamma)n}
    + \frac{1}{s_m} \cdot \frac{1}{\gamma} (m-r')^{\beta}
    +\frac{1}{s_m(1-\gamma)} + \frac{R^{-1}}{\gamma s_m}\right) \\
    & \le \frac{1}{s_m} \cdot \frac{1}{\gamma} (m-r')^\beta \cdot O\left( 1+
\frac{r'\ln (R)}{(1-\gamma)n}
 \right). \qedhere
\end{align*}  
\end{proof}

After having established some tools for obtaining upper bounds on the time required to escape from local optima, we now analyze the performance of \sdfea on the sub-problems without local optima. 
A maximization function is called \emph{unimodal} in~\cite{DrosteJW02} if and only if there is only one local maximum, where a local maximum is defined as a search point with no better neighbors. In this paper, we use this definition of unimodal functions. Thus, on unimodal functions the gap of all search points in the search space (except for the global optima) is~1, so the algorithm can always make progress in phase~1.

In the following theorem, we state how \sdfea behaves on unimodal functions compared to RLS using an upper bound based on the fitness-level method~\cite{Wegener01}. The theorem and its proof are similar to the second part of Lemma~4 in~\cite{RajabiW21evocop}, and with a good choice of parameters~$\gamma$ and~$R$, the same asymptotic result can be achieved (see the following corollary).

\begin{theorem} \label{th:unimodal}
Let $\beta>1$, $0<\gamma<1$ and $R\ge \recForR$.
	Let $f\colon\{0,1\}^n\to \R$ be a unimodal function and $\card{\im(f)}$ be the number of its fitness values. Let $f_i$ be the $i$-th fitness value in an increasing order of the fitness values of $f$. We consider all fitness levels $A_1, \dots, A_{\card{\im(f)}}$ such that $A_i$ contains search points with fitness value~$f_i$. Let~$s_i$ be a lower bound on the probability that RLS finds an improvement from any search point in~$A_i$. Denote by $T$ the runtime of \sdfea on~$f$. Then 
	\[
	E[T] \le  \left(\frac 1{1-\gamma}+O\left(R^{-1}\gamma^{-1}\right)\right) \sum_{i=1}^{ \card{\im(f)}-1 }\frac 1{s_i}.
	\]
\end{theorem}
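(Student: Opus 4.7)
The plan is to follow the classical fitness-level method of Wegener~\cite{Wegener01}. First I would argue that
\[E[T] \le \sum_{i=1}^{\card{\im(f)}-1} E[T_i],\]
where $T_i$ is the number of iterations the algorithm spends with its current search point in level $A_i$. This decomposition is valid because fitness is non-decreasing throughout the run: the only moves ever accepted are strict improvements and (only while $r_t = 1$) equal-fitness moves, so once level $A_i$ has been left for a strictly better one it is never re-entered. Consequently it suffices to show, for each non-maximal level $A_i$, that $E[T_i] \le \frac{1}{s_i}\bigl(\frac{1}{1-\gamma} + O(R^{-1}\gamma^{-1})\bigr)$ and then to sum over $i$.

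For the per-level bound I would exploit unimodality: every non-optimal point in $A_i$ has an improving Hamming neighbor, so $\fitnesslevelgap(x)=1$ for every $x \in A_i$ and the analysis of Section~\ref{sec:workingprinciples} applies with $m=1$. I split $T_i = \sum_r I_r$, where $I_r$ counts the iterations spent in phase~$r$ within $A_i$. In phase~$1$, Lemma~\ref{lem:s-distribution} says a single bit is flipped with probability $1-\gamma$; combined with the hypothesis that $s_i$ lower-bounds the improvement probability of a $1$-bit flip from any point in $A_i$, the per-iteration success probability is at least $(1-\gamma)s_i$. Stochastic domination by a geometric random variable then yields $E[I_1] \le \frac{1}{(1-\gamma)s_i}$. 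For $r \ge 2$, I would invoke Lemma~\ref{lem:escaping-time-large-strengths} with $m=1$ and $s_m = s_i$, which delivers $\sum_{r\ge 2} E[I_r] = O(R^{-1}\gamma^{-1}/s_i)$. Adding the two contributions completes the per-level bound, and summing over $i$ gives the theorem.

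The only real subtlety is the equal-fitness acceptance in phase~$1$: the current search point may drift within $A_i$ before an improvement is found, so the per-iteration success probability has to be bounded below uniformly over the whole level rather than from the entry point alone. This is precisely the role of the hypothesis that $s_i$ lower-bounds RLS's improvement probability from \emph{every} $x \in A_i$; once it is made explicit the geometric argument for $E[I_1]$ goes through unchanged. Beyond invoking the two lemmas above I do not foresee any real obstacle; the proof is short and essentially structural.
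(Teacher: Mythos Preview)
Your proposal is correct and follows essentially the same route as the paper: a fitness-level decomposition $E[T]=\sum_i E[I^{(i)}]$, then within each level the phase-$1$ contribution bounded by $\frac{1}{(1-\gamma)s_i}$ via Lemma~\ref{lem:s-distribution} and a geometric argument, and the remaining phases handled by Lemma~\ref{lem:escaping-time-large-strengths} with $m=1$ and $s_m=s_i$. Your explicit remark that the uniform lower bound $s_i$ over all of $A_i$ is what makes the geometric bound survive the equal-fitness drifting in phase~$1$ is a nice clarification that the paper leaves implicit.
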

\begin{proof}
We define by $I^{(i)}$ the number of all iterations spent to leave the fitness level~$i$. Using linearity of expectation, we have
\[E[T] = \sum^{\card{\im(f)}-1}_{i=1} E[I^{(i)}].\]

	Let $I_r^{(i)}$ be the number of iterations spent in phase~$r$ after a search point for~$A_i$ was found. Then
	\begin{align*}
    I^{(i)}  = \sum^{\floor{\frac{n}{2.1}}}_{r=1} I_r^{(i)}.
\end{align*}

	As long as the strength is~1, the algorithm flips exactly one bit with probability at least~$1-\gamma$ (Lemma~\ref{lem:s-distribution}).
	The worst-case time to leave fitness level~$i$ is at most~$\frac 1{(1-\gamma)s_i}$ using the geometric distribution with success probability $s_i (1-\gamma)$. Hence, for each fitness level~$i$,
	we bound $E[I_1^{(i)}]$ from above 
	by $\frac 1{(1-\gamma)s_i}$, and for $r>1$, we bound $E[I_r^{(i)}]$ from above by using Lemma~\ref{lem:escaping-time-large-strengths} with~$s_m=s_i$. Thus
\[E[I^{(i)}_{>1}]=O\left(R^{-1}\gamma^{-1} \frac{1}{s_i}\right).\]

Altogether, we have
\begin{align*}
E[T] & = \sum^{\card{\im(f)}-1}_{i=1} E[I^{(i)}]  \\
&\le  \sum_{i=1}^{ \card{\im(f)}-1 } \left(\frac 1{s_i(1-\gamma)} + O\left(
R^{-1}\gamma^{-1} \frac{1}{s_i}
\right)\right) \\
& \le \left(\frac 1{1-\gamma}+O\left(R^{-1}\gamma^{-1}\right)\right) \sum_{i=1}^{ \card{\im(f)}-1 }\frac 1{s_i}. \qedhere
\end{align*}
\end{proof}

The following unimodal benchmark functions \onemax and \leadingones have been extensively studied in the literature. They are defined by
\begin{align*}
&\onemax(x)\coloneqq \|x\|_1, \\
&\leadingones(x)\coloneqq \sum_{i=1}^n \prod_{j=1}^i x_j
\end{align*}
for all $x=(x_1,\dots,x_n)\in\{0,1\}^n$, where $\|x\|_1$ is the number of one-bits in the bit string.

The corollary below is a result of Theorem~\ref{th:unimodal} applied on the unimodal functions \onemax with~$s_i=(n-(i-1))/n$ and \leadingones with~$s_i=1/n$.

\begin{corollary}
\label{cor:easyproblems}
	The expected runtime of the \sdfea with $\beta>1$, $\gamma=o(1)$ and $R\ge \recForR$ on \onemax is at most~$(1+o(1))n\ln n$ and on \leadingones is at most~$(1+o(1))n^2$.
\end{corollary}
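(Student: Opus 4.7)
The plan is to deduce this corollary directly from Theorem~\ref{th:unimodal} by verifying (a) that its prefactor collapses to $1+o(1)$ under the stated parameter regime, and (b) that the fitness-level sums $\sum_{i} 1/s_i$ reduce to the classical RLS bounds on \onemax and \leadingones.

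First I would handle the prefactor. Under $\gamma = o(1)$ we have $(1-\gamma)^{-1} = 1+o(1)$. The condition $R \ge \recForR$ gives $R^{-1} \le e^{-1/\gamma}$, so $R^{-1}\gamma^{-1} \le e^{-1/\gamma}/\gamma$; as $\gamma \to 0$ this term vanishes faster than any polynomial in $1/\gamma$ grows, hence $R^{-1}\gamma^{-1} = o(1)$. Combined, the multiplicative factor in Theorem~\ref{th:unimodal} is $1+o(1)$.

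Next I would instantiate the fitness-level sum. Both \onemax and \leadingones are unimodal with $\card{\im(f)} = n+1$ fitness values. For \onemax, a search point with $i-1$ ones has $n-(i-1)$ zero-bits and RLS improves iff it flips any one of them, giving $s_i = (n-(i-1))/n$; thus
\[
\sum_{i=1}^{n} \frac{1}{s_i} = n \sum_{j=1}^{n}\frac{1}{j} = n H_n = (1+o(1))\, n\ln n.
\]
For \leadingones, the only improving single-bit flip is the one at the position of the first zero-bit, giving $s_i = 1/n$ and $\sum_{i=1}^{n} 1/s_i = n^2$.

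Multiplying the $1+o(1)$ prefactor with each sum yields the two claimed bounds $(1+o(1))\,n\ln n$ and $(1+o(1))\,n^2$. There is no real obstacle here: the argument is a direct specialisation of Theorem~\ref{th:unimodal}, and the only point that requires any care is checking that the error term $O(R^{-1}\gamma^{-1})$ is genuinely $o(1)$ for the prescribed choice $R \ge \recForR$ with $\gamma = o(1)$, which follows because $e^{-1/\gamma}$ decays super-polynomially in $1/\gamma$.
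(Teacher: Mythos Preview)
Your proposal is correct and follows exactly the paper's approach: the paper simply states that the corollary is Theorem~\ref{th:unimodal} applied with $s_i=(n-(i-1))/n$ for \onemax and $s_i=1/n$ for \leadingones, and you have merely filled in the routine details (checking that the prefactor is $1+o(1)$ under $\gamma=o(1)$, $R\ge \recForR$, and evaluating the two fitness-level sums).
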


\section{Analysis on $\jump_{k,\delta}$} \label{sec:results}\label{sec:jump}

In this section, we use the results in the previous section to prove a bound on a generalization of $\jump_\delta$ called $\jump_{k,\delta}$ with two parameters~$k$ and~$\delta$, see Figure~\ref{fig:generaljump} for a depiction. 
\begin{figure}[ht!]
    \centering
    \includegraphics[width=0.75\linewidth]{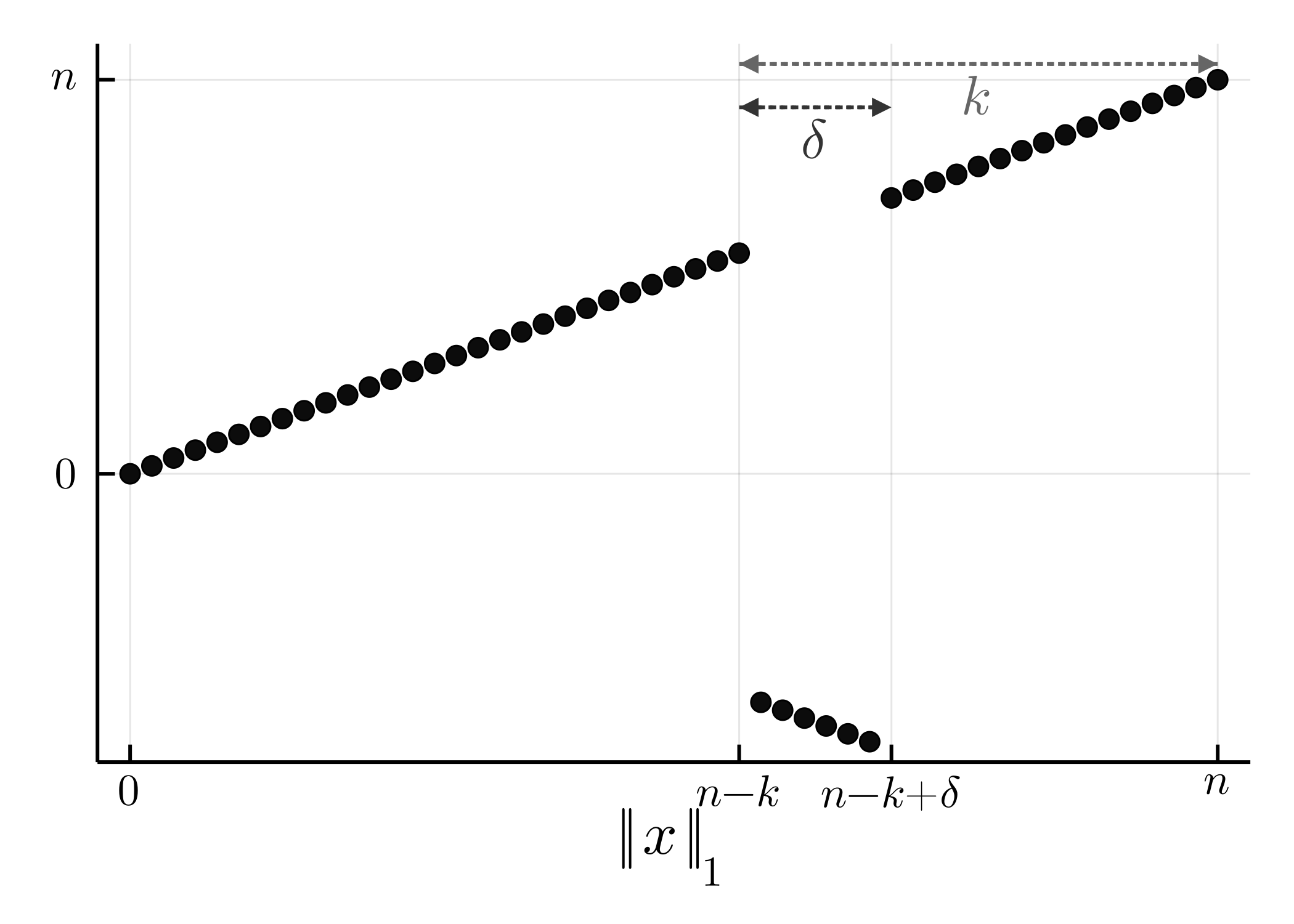}
    \caption{The function $\jump_{k,\delta}$.}
    \label{fig:generaljump}
\end{figure}

This function is based on the well-known \jump benchmark \cite{DrosteJW02}, in which the place of the jump with size~$\delta$ starts at the Hamming distance~$k$ from the global optimum.
In other words, after the jump, there is a unimodal sub-problem of length~$k-\delta$. The classical \jump function is a special case of $\jump_{k,\delta}$ with $k=\delta$, \ie,
$\jump_\delta=\jump_{\delta,\delta}$. 
Formally, for all $x\in \{0,1\}^n$, we have
\[
\jump_{k,\delta}(x) = \begin{cases}
\|x\|_1 & \text{ if } \|x\|_1 \in [0..n-k] \cup [n-k+\delta..n], \\
-\|x\|_1 & \text{otherwise.}
\end{cases}
\]
We refer the interested reader to see~\cite{BamburyBD21} for more information about $\jump_{k,\delta}$, where the performance of the \ooea, the \oofea, and the robust version of SD-RLS (\sdrlss) are carefully analyzed. Also, Rajabi and Witt~\cite{RajabiW21gecco} independently define the jump function with an offset to analyze the recovery time for the strength in the algorithm~SD-RLS with radius memory (\sdrlsss) after leaving the local optimum. Recently, Witt in~\cite{Witt21} analyzes the performance of some other algorithms on the function~$\jump_{k,\delta}$ (which is called $\textsc{JumpOffset}$ in the paper).

We want to show that the algorithm~\sdfea performs relatively efficiently on $\jump_{k,\delta}$ in both cases when $k=\delta$ (\ie, $\jump_\delta$) and $k>\delta$. 
In the first case, when there is only one improving solution, \sdfea with $\gamma=o(1)$ optimizes $\jump_\delta$ as efficient as \sdrlss thanks to Theorem~\ref{th:escaping-time}. The result is formally proven in Theorem~\ref{th:jump}. 

\begin{theorem}
\label{th:jump}
	The expected runtime~$E[T]$ of \sdfea with~$\beta>1$, $\gamma=o(1)$ and $R\ge \recForR$ on $\jump_{\delta}$ with $2\le \delta = o\left(n/\ln(R)\right)$ satisfies 
	\[
	E[T] \le  \binom{n}{\delta}(1+o(1)).
	\]
\end{theorem}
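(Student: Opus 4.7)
The plan is to decompose the run into two segments---a climb along the OneMax-like region of $\jump_\delta$ up to the plateau of local optima $\{x:\|x\|_1=n-\delta\}$, followed by a single jump of Hamming distance $\delta$ to the global optimum $1^n$---and to bound each segment separately using the machinery already established above.

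For the climb I would argue as in the proof of Theorem~\ref{th:unimodal}. For any current search point~$x$ with $\|x\|_1=j<n-\delta$ we have $\fitnesslevelgap(x)=1$ and the probability of improving via a one-bit flip is $s_j=(n-j)/n$. While the strength is $r=1$, an improvement is produced in each iteration with probability at least $(1-\gamma)s_j$, so the expected number of strength-$1$ iterations spent on level~$j$ is at most $1/((1-\gamma)s_j)$; by Lemma~\ref{lem:escaping-time-large-strengths} applied at that level, the extra time spent at strengths $r>1$ contributes only $O(R^{-1}\gamma^{-1}/s_j)$. Summing over $j=0,1,\dots,n-\delta-1$ and using $\sum_j n/(n-j)=O(n\ln n)$ gives a total climb time of $(1+o(1))\,nH_n=O(n\ln n)$. (If the random starting point happens to lie in the valley, a single improving one-bit flip escapes it in $O(n)$ expected steps, which is absorbed.)

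For the jump, once a strict improvement carries the algorithm to a search point $x$ with $\|x\|_1=n-\delta$, the counter~$u$ and strength~$r$ are reset and $\fitnesslevelgap(x)=\delta$; Theorem~\ref{th:escaping-time} with $m=\delta$ applies directly and gives
\[
E[T_2]\le\binom{n}{\delta}\left(\frac{1}{1-\gamma}+O\left(\frac{\delta\ln(R)}{(1-\gamma)n}+R^{-1}\gamma^{-1}\right)\right).
\]
Under the hypotheses $\gamma=o(1)$, $R\ge\recForR$, and $\delta=o(n/\ln(R))$, each error term inside the $O(\cdot)$ is $o(1)$: the condition $R\ge e^{1/\gamma}$ forces $R^{-1}\gamma^{-1}\le e^{-1/\gamma}/\gamma=o(1)$, and the first term is $o(1)$ directly from the assumption on~$\delta$. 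Combining the climb and the jump, and noting that $\binom{n}{\delta}=\Omega(n^2)$ for $\delta\ge 2$ absorbs the $O(n\ln n)$ climb time, I obtain $E[T]\le\binom{n}{\delta}(1+o(1))$ as claimed. The only delicate point I foresee is verifying that Lemma~\ref{lem:escaping-time-large-strengths} can be legitimately re-invoked at each climb level: its hypothesis ``immediately following a strict improvement'' is met anew each time the OneMax level rises, which is precisely why the overhead from larger strengths remains a lower-order factor rather than accumulating across levels.
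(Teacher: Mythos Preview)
Your proposal is correct and follows essentially the same approach as the paper: decompose into the \onemax-like climb (bounded by $O(n\ln n)$ via the argument of Theorem~\ref{th:unimodal}) and the jump from the local optimum (bounded by Theorem~\ref{th:escaping-time} with $m=\delta$), then observe that the jump term dominates. The paper invokes Theorem~\ref{th:unimodal} directly rather than re-deriving it inline as you do, and it does not explicitly address the case of a starting point in the valley, but these are cosmetic differences.
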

\begin{proof}
	Before reaching a local optimum with~$n-m$ one-bits, $\jump_\delta$ is equivalent to \onemax. Thus,
	 the expected time until \sdfea reaches the local optimum is at most $O(n\ln n)$ via Theorem~\ref{th:unimodal} with~$s_i=(n-(i-1))/n$.

	For a local optimum~$x$ we have $\fitnesslevelgap(x)=\delta$ according to the definition of \jump. Hence, using Theorem~\ref{th:escaping-time}, the algorithm finds the global optimum from the local optimum within the expected time at most
	\[ \binom{n}{\delta}(1+o(1)).\]
	This 
	 dominates the expected time the algorithm spends before reaching the local optimum.  
\end{proof}
For $\gamma=\Theta(1)$, by closely following the analysis of Theorem~\ref{th:jump}, it is easy to see that the expected runtime of \sdfea on $\jump_{\delta}$ is
\[\binom{n}{\delta}\left(\frac{1}{1-\gamma}+o(1)\right),\]
which is still very efficient.

We now present an upper bound on the runtime of the proposed algorithm on $\jump_{k,\delta}$.
\begin{theorem}
\label{th:generaljump}
	The expected runtime~$E[T]$ of \sdfea with $\beta>1$, $0<\gamma<1$ and $R\ge\recForR$ on $\jump_{k,\delta}$ with $\delta = o\left( n/\ln(R) \right)$ satisfies 
	\[
	E[T] = O\left(
		\binom{n}{\delta}\binom{k}{\delta}^{-1} (\delta-r')^\beta \cdot \gamma^{-1} +  n\ln n\right),
	\]
	where $r'=\min\left\{\delta,\arg\max_{r}\left\{ \binom{n}{r} \le \binom{n}{\delta}\binom{k}{\delta}^{-1} \frac{1}{\gamma} (\delta-r)^\beta \right\}\right\}.$
\end{theorem}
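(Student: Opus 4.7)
The plan is to decompose the run of \sdfea on $\jump_{k,\delta}$ into three stages: (i) reaching the local-optimum plateau (search points with $\|x\|_1 = n-k$), (ii) the escape step that produces the first search point of fitness at least $n-k+\delta$, and (iii) the climb from there to the global optimum. In stages~(i) and~(iii) the fitness function coincides (up to an additive shift or sign) with \onemax on the region visited, since a single bit-flip in the direction of the nearest optimum is always strictly improving, so the fitness-level argument behind Theorem~\ref{th:unimodal} applies with $s_i = (n-(i-1))/n$. The two resulting harmonic sums contribute $O(n \ln n)$ in total, which provides the additive $n \ln n$ term in the claim. Stage~(ii) is the only place the jump is felt; I would handle it by a direct appeal to Theorem~\ref{th:escaping-time-many-solutions}.

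For stage~(ii), let $x$ be any plateau point, i.e.\ $\|x\|_1 = n-k$. Every $y$ of the same fitness also satisfies $\|y\|_1 = n-k$, and every strict improvement $y'$ of $y$ must satisfy $\|y'\|_1 \ge n-k+\delta$, which already requires at least $\delta$ bit-flips. Consequently $\individualgap(y) = \delta$ for every such $y$, and hence $\fitnesslevelgap(x) = \delta$. Counting $\delta$-flip patterns that produce a strict improvement, one must flip exactly $\delta$ of the $k$ zero-bits and no one-bits, giving $\binom{k}{\delta}$ favourable patterns out of the $\binom{n}{\delta}$ possible ones, so I may take $s_m = \binom{k}{\delta}/\binom{n}{\delta}$. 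Substituting $m = \delta$ and this value of $s_m$ into Theorem~\ref{th:escaping-time-many-solutions} reproduces exactly the definition of~$r'$ given in the present statement and yields an escape-time bound of
\[
\binom{n}{\delta} \binom{k}{\delta}^{-1} \gamma^{-1} (\delta - r')^{\beta} \cdot O\!\left( 1 + \frac{r'\, \ln R}{(1-\gamma)\, n} \right).
\]

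Finally I would simplify the inner factor: since $r' \le \delta$ and $\delta = o(n/\ln R)$, one has $r' \ln(R)/n = o(1)$, so (with $\gamma$ treated as bounded away from~$1$, as elsewhere in the paper) the inner $O$-expression collapses to $O(1)$. Adding the $O(n \ln n)$ contribution from stages~(i) and~(iii) then yields the claimed bound. The only step with genuine content is the combinatorial identification of~$s_m$; the rest is bookkeeping and two invocations of the earlier theorems. The one subtlety I expect to verify is that Theorem~\ref{th:escaping-time-many-solutions} really does apply the moment the plateau is entered: this holds because entering the plateau is itself a strict improvement, so its hypothesis is met, and the equal-fitness moves accepted inside phase~$r=1$ do not disturb the analysis since every plateau point shares the same $\individualgap$ value~$\delta$.
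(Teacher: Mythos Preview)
Your proposal is correct and follows essentially the same approach as the paper: the same three-stage decomposition, the same appeal to Theorem~\ref{th:unimodal} with $s_i=(n-(i-1))/n$ for the two \onemax-like slopes, and the same invocation of Theorem~\ref{th:escaping-time-many-solutions} with $m=\delta$ and $s_m=\binom{k}{\delta}/\binom{n}{\delta}$ for the escape step. Your simplification of the inner $O$-factor via $r'\le\delta=o(n/\ln R)$ is exactly what the paper does, and your remark about the plateau being entered via a strict improvement is a small extra bit of care that the paper leaves implicit.
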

\begin{proof}
Until reaching the local optimum with~$n-k$ one-bits, $\jump_{k,\delta}$ is equivalent to \onemax. Thus, the expected time until \sdfea reaches the local optimum is at most $O(n\ln n)$ via Theorem~\ref{th:unimodal} with~$s_i=(n-(i-1))/n$.
	
	For a local optimum~$x$, we have $\fitnesslevelgap(x)=\delta$ according to the definition of~$\jump_{k,\delta}$. Using Theorem~\ref{th:escaping-time-many-solutions} with $s_m=\binom{n}{\delta}^{-1}\binom{k}{\delta}$, the algorithm finds a strict improvement with at least~$n-k+\delta$ one-bits from the local optimum within expected time at most
	\[ O\left(\binom{n}{\delta}\binom{k}{\delta}^{-1} (\delta-r')^\beta \cdot \gamma^{-1}\right),\]
	where we used our assumption $\delta=o(n/\ln (R))$.
	 
	 After leaving the local optimum, $\jump_{k,\delta}$ is again equivalent to \onemax on the second slope. Using the same arguments as in the beginning of the proof, the expected time until \sdfea reaches the global optimum is at most~$O(n\ln n)$ via Theorem~\ref{th:unimodal} with~$s_i=(n-(i-1))/n$.  
\end{proof}

In the following corollary, we see a scenario where we have $r'\ge \delta - c$ for some constant~$c$, resulting in that the term~$(\delta-r')^\beta$ disappears from the asymptotic upper bound. This is also an example where the \sdfea can asymptotically outperform the \oofea.

\begin{corollary}
\label{cor:generaljump}
    Let $\Delta\ge 2$ be a constant.
	The expected runtime~$E[T]$ of \sdfea with $\beta>1$, $0<\gamma<1$ and $R\ge\recForR$ on $\jump_{k,\delta}$ with $k=\omega(1) \cap O(\ln n)$ and $\delta=k-\Delta$  satisfies 
	\[
	E[T] = O\left(
		\binom{n}{\delta}\binom{k}{\delta}^{-1}  \gamma^{-1}\right).
	\]
\end{corollary}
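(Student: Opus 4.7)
}
The plan is to apply Theorem~\ref{th:generaljump} directly. The bound produced by that theorem is
\[
E[T] \;=\; O\!\left(\binom{n}{\delta}\binom{k}{\delta}^{-1}(\delta-r')^{\beta}\gamma^{-1} + n\ln n\right),
\]
and so I only need to verify three things under the stronger hypothesis $k=\omega(1)\cap O(\ln n)$, $\delta=k-\Delta$, $\Delta\ge 2$ constant: (a) that the theorem applies, \ie, $\delta=o(n/\ln R)$; (b) that $(\delta-r')^{\beta}=O(1)$; and (c) that the additive $n\ln n$ is absorbed into the first summand. Only (b) needs a genuine estimate; (a) and (c) are straightforward asymptotic bookkeeping.

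For (a), since $\Delta$ is constant and $k=O(\ln n)$ we have $\delta=O(\ln n)$; with $R\ge e^{1/\gamma}$ this yields $\delta\ln R = O(\ln n/\gamma) = o(n)$, so Theorem~\ref{th:generaljump} is applicable. For (b), I would plug $r=\delta-1$ into the inequality defining $r'$. The condition $\binom{n}{r}\le \binom{n}{\delta}\binom{k}{\delta}^{-1}\gamma^{-1}(\delta-r)^{\beta}$ becomes, at $r=\delta-1$,
\[
\binom{n}{\delta-1} \;\le\; \binom{n}{\delta}\binom{k}{\delta}^{-1}\gamma^{-1},
\qquad\text{equivalently}\qquad
\delta\binom{k}{\delta} \;\le\; \gamma^{-1}(n-\delta+1).
\]
Since $\binom{k}{\delta}=\binom{k}{\Delta}=O(k^{\Delta})=O((\ln n)^{\Delta})$ (with $\Delta$ constant) and $\delta=O(\ln n)$, the left-hand side is $O((\ln n)^{\Delta+1})$, whereas the right-hand side is $\Theta(n/\gamma)=\omega((\ln n)^{\Delta+1})$. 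Hence the test inequality holds for all sufficiently large $n$. Moreover, as $r$ increases with $r<\delta$, the left-hand side $\binom{n}{r}$ increases monotonically while the right-hand side decreases (through the factor $(\delta-r)^{\beta}$), and at $r=\delta$ the right-hand side vanishes. Therefore the largest feasible $r$ is exactly $r'=\delta-1$, giving $(\delta-r')^{\beta}=1$.

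For (c), since $k=\omega(1)$ and $\Delta$ is constant, $\delta\ge 2$ for all sufficiently large $n$, whence $\binom{n}{\delta}\ge \binom{n}{2}=\Omega(n^{2})$. Combined with $\binom{k}{\delta}^{-1}=\Omega((\ln n)^{-\Delta})$ and $\gamma^{-1}\ge 1$, the first summand is $\Omega(n^{2}/((\ln n)^{\Delta}\gamma))$, which dominates $n\ln n$. Substituting $(\delta-r')^{\beta}=1$ and dropping the absorbed term in Theorem~\ref{th:generaljump} gives the claimed $E[T]=O\!\bigl(\binom{n}{\delta}\binom{k}{\delta}^{-1}\gamma^{-1}\bigr)$. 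The only mildly non-routine step is the comparison establishing $r'=\delta-1$; the rest of the argument is just checking that polylogarithmic factors in $n$ are dwarfed by genuine polynomial factors.
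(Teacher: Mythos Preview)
Your argument is correct and, for the key step (b), actually tighter and more elementary than the paper's. The paper establishes only $r' \ge k - 2\Delta = \delta - \Delta$: it bounds the ratio $\binom{n}{r}\big/\bigl(\binom{n}{\delta}\binom{k}{\delta}^{-1}\gamma^{-1}(\delta-r)^\beta\bigr)$ for all $r \le k-2\Delta$ via the envelope $(n/m)^m \le \binom{n}{m} \le (en/m)^m$ and then shows this ratio is $o(1)$ using $e^k k^{2\Delta}/n^\Delta = o(1)$ for $k \le \ln n$. This yields $(\delta-r')^\beta \le \Delta^\beta = O(1)$. You instead test the defining inequality at the single point $r=\delta-1$, reduce it via $\binom{n}{\delta-1}/\binom{n}{\delta} = \delta/(n-\delta+1)$ to $\delta\binom{k}{\Delta} \le \gamma^{-1}(n-\delta+1)$, and dispatch that by a polylog-versus-polynomial comparison, obtaining the sharper $r'=\delta-1$ with less work. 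Both routes feed the same $O(1)$ factor into Theorem~\ref{th:generaljump}, so the final bound is identical; your version simply avoids the Stirling-type estimates.

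One small slip in (a): from $R \ge e^{1/\gamma}$ you only get $\ln R \ge 1/\gamma$, not $\ln R = O(1/\gamma)$, so the chain $\delta\ln R = O(\ln n/\gamma) = o(n)$ is not justified as written (and the second step also fails if $\gamma$ is allowed to be as small as, say, $\ln n/n$). The paper's own proof does not verify the hypothesis $\delta = o(n/\ln R)$ of Theorem~\ref{th:generaljump} at all, so this is a gap it shares; in the intended regime of moderate $R$ and constant or slowly vanishing $\gamma$ it is harmless.
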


\begin{proof}
We show that $r'$ defined in Theorem~\ref{th:generaljump} is at least~$k-2\Delta$. To this aim, we show that for $r\le k-2\Delta$, we have

\begin{align*}
    \frac{\binom{n}{r}}{\binom{n}{\delta}\binom{k}{\delta}^{-1}\gamma^{-1}(\delta-r)^\beta} \le \gamma \frac{\binom{n}{k-2\Delta}}{\binom{n}{k-\Delta}\binom{k}{\Delta}^{-1}\Delta^\beta} \le \gamma \frac{(en/(k-2\Delta))^{k-2\Delta}(ek/\Delta)^\Delta}{(n/(k-\Delta))^{k-\Delta}\Delta^\beta},
\end{align*}
where we have used $\delta=k-\Delta$ and the inequality~$(n/m)^m \le \binom{n}{m} \le (en/m)^m$.
The last expression equals
\begin{align*}
    \gamma \frac{e^{k-\Delta}k^\Delta}{n^{\Delta}\Delta^{\Delta+\beta}} \frac{(k-\Delta)^{k-\Delta}}{(k-2\Delta)^{k-2\Delta}} &= \gamma \frac{e^{k-\Delta}k^\Delta}{n^{\Delta}\Delta^{\Delta+\beta}} (k-\Delta)^\Delta \left(1+\frac{\Delta}{k-2\Delta} \right)^{k-2\Delta} \\
    & \le \gamma \frac{e^kk^\Delta}{n^{\Delta}\Delta^{\Delta+\beta}} (k-\Delta)^\Delta = o(1),
\end{align*}
where we use the assumption~$k \le \ln n$ and the estimate $1+x\le e^x$ for all $x\in\R$. Thus for $r\le k-2\Delta$ and a large enough $n$, 
we have
\[\binom{n}{r} \le \binom{n}{\delta}\binom{k}{\delta}^{-1}\gamma^{-1}(\delta-r)^\beta,\]
which means that $r'\ge k-2\Delta$.
Therefore, using the result of Theorem~\ref{th:generaljump} with $r'\ge k-2\Delta$, we obtain
\begin{align*}
    E[T] = O\left(
		\binom{n}{\delta}\binom{k}{\delta}^{-1} \Delta^\beta \cdot \gamma^{-1} +  n\ln n\right) = O\left(
		\binom{n}{\delta}\binom{k}{\delta}^{-1} \gamma^{-1} \right),
\end{align*}
where the $O\left(n\ln n\right)$ term is subsumed by the first term according to our assumptions.
\end{proof}

\section{Experiments} \label{sec:experiments}

In this section, we present the results of the experiments carried out to measure the performance of the proposed algorithm and several related ones on concrete problem sizes.

\begin{figure}[t!]
    \centering
	\includegraphics[height=\linewidth]{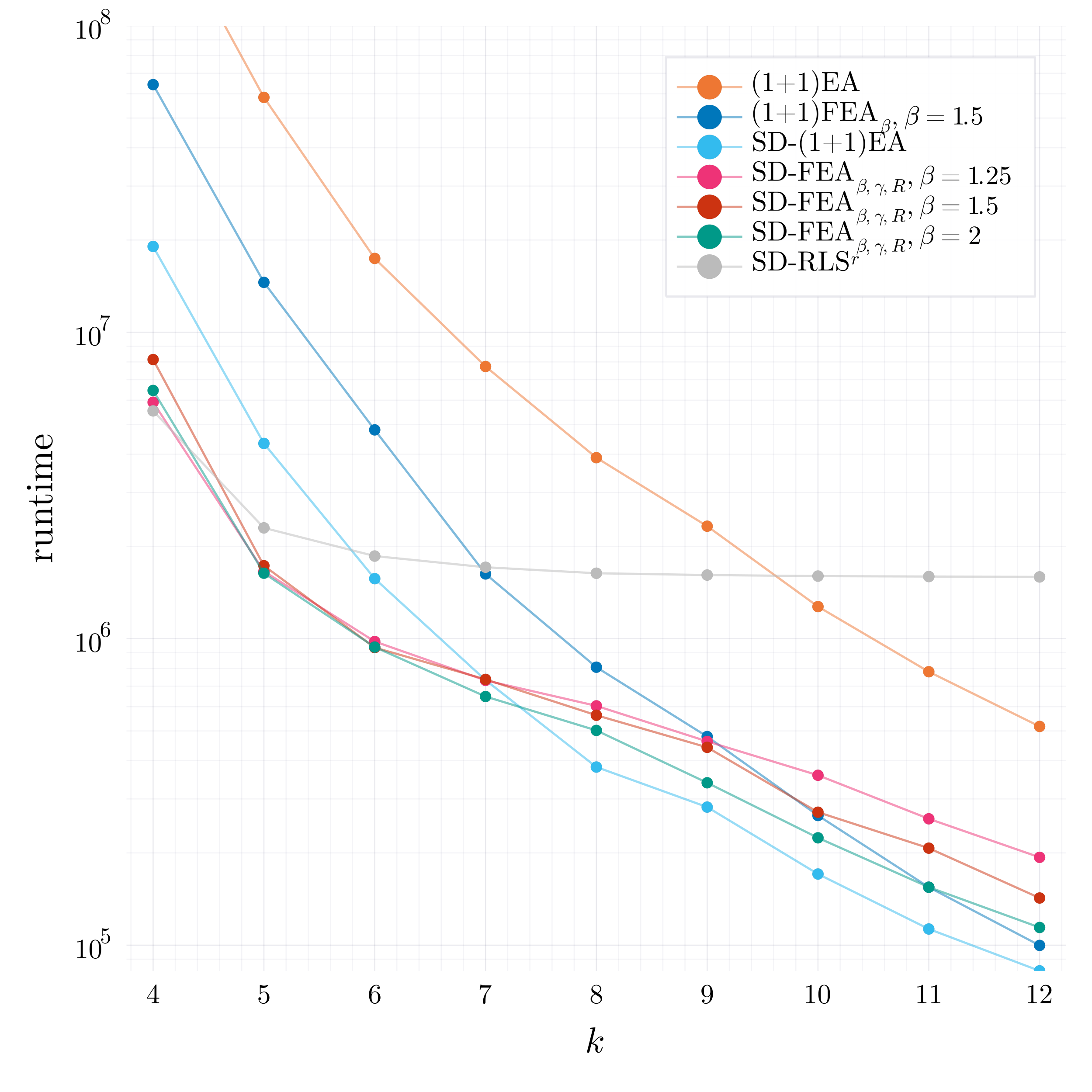}
	\caption{Average number (over 200 runs) of fitness calls the mentioned algorithms spent to optimize $\jump_{k,4}$ ($n=100$) with different values for~$k$.}\label{fig:experimentjump}
\end{figure}

We ran an implementation of \sdfea with $\beta\in \{1.25, 1.5,2 \}$, $\gamma=1/4$ and $R=25$ on the fitness function $\jump_{k,\delta}$ of size $n=100$ with the jump size~$\delta=4$ and $k$ varying from 4 to 13. We recall that we have the classical \jump function for $k=4$.
We compared our algorithm with the classical \ooea with standard mutation rate $1/n$, the \oofea from~\cite{DoerrLMN17} with~$\beta=1.5$, the \sdooea presented in~\cite{RajabiW20} with~$R=n^2$, and \sdrlss from~\cite{RajabiW21evocop} with~$R=n^2$.
The parameter settings for these algorithms were all recommended in the corresponding papers. The parameter values for our algorithm were chosen in an ad-hoc fashion, slightly inspired by our theoretical results.
All data presented is the average number of fitness calls over 200 runs. 

As can be seen in Figure~\ref{fig:experimentjump}, \sdrlss outperforms the rest of the algorithms for $k=4$, \ie, when there is only one improving solution for local optima. Our \sdfea needs roughly  $(1-\gamma)^{-1}$ times more fitness function calls than that since it ``wastes'' a fraction of $\gamma$ of the iterations on wrong mutation strengths in phase~4. Not all these iterations are wasted as the small differences for different values of $\beta$ show. The higher $\beta$ is, the smaller values the power-law distribution typically takes, meaning that the mutation rate in these iterations stays closer to the ideal one. All three variants of the \sdfea significantly outperform the \oofea, \sdooea and \ooea. As~$k$ is increasing, the average running time of \sdrlss improves only little and remains almost without change after $k=5$; consequently, this algorithm becomes less and less competitive for growing~$k$. This is natural since this algorithm necessarily has to reach phase~4 to be able to flip 4 bits. All other algorithms, especially the \oofea, perform increasingly better with larger~$k$.

In a middle regime of $k \in \{5,6,7\}$, the \sdfea has the best average running time among the algorithms regarded. Although both with $k=4$ and for $k\ge 8$, the \sdfea is not the absolutely best algorithm, but its performance loss over the most efficient algorithm (\sdrlss for $k=4$ and \sdooea for $k\ge 8$) is small. This finding supports our claim that our algorithm is a good approach to leaving local optima of various kinds.

For a large~$k$, such as~10 or~11, the good performance of the \sdooea and \oofea might appear surprising. The reason for the slightly weaker performance of our algorithm is the relatively small width of the valley of low fitness ($\delta = 4$), where our algorithm cannot fully show its advantages, but pays the price of sampling from the right heavy-tailed distribution only with probability~$\gamma/2$.

\section{Recommended Parameters}\label{sec:parameters}

In this section, we use our theoretical and experimental results to derive some recommendations for choosing the parameters $\beta$, $\gamma$, and $R$ of our algorithm. We note that having three parameters for a simple $(1+1)$-type optimizer might look frightening at first, but a closer look reveals that setting these parameters is actually not too critical. 

For the power-law exponent $\beta$, as in~\cite{DoerrLMN17}, there is little indication that the precise value is important. The value $\beta = 1.5$ suggested in~\cite{DoerrLMN17} gives good results even though in our experiments, $\beta=2$ gave slightly better results. We do not have an explanation for this, but in the light of the small differences we do not think that a bigger effort to optimize $\beta$ is justified.

Different from the previous approaches building on stagnation detection, our algorithm also does not need specific values for the parameter $R$, which governs the maximum phase length $\ell_r = \frac{1}{1-\gamma} \binom{n}{r} \ln(R)$ and in particular leads to the property that a single improving solution in distance $m$ is found in phase $m$ with probability $1 - \frac 1R$ (as follows from the proof of Lemma~\ref{lem:failure-probability}). Since we have the heavy-tailed mutations available, it is less critical if an improvement in distance $m$ is missed in phase~$m$. At the same time, since our heavy-tailed mutations also allow to flip more than $r$ bits in phase $r$, longer phases obtained by taking a larger value of $R$ usually do not have  a negative effect on the runtime. For these reasons, the times computed in Theorem~\ref{th:escaping-time} depend very little on~$R$. Since the phase length depends only logarithmically on $R$, we feel that it is safe to choose $R$ as some mildly large constant, say $R=25$.

The most interesting choice is the value for~$\gamma$, which sets the balance between the SD-RLS mode of the algorithm and the heavy-tailed mutations. A large rate $1-\gamma$ of SD-RLS iterations is good to find a single improvement, but can lead to drastic performance losses when there are more improving solutions. Such trade-offs are often to be made in evolutionary computation. For example, the simple RLS heuristic using only 1-bit flips is very efficient on unimodal problems (e.g., has a runtime of $(1+o(1))n\ln n$ on \onemax), but fails on multimodal problems. In contrast, the \oea flips a single bit only with probability approximately $\frac 1e$, and thus optimizes \onemax only in time $(1+o(1)) e n \ln n$, but can deal with local optima. In a similar vein, a larger value for $\gamma$ in our algorithm gives some robustness to situations where in phase $r$ other mutations than $r$-bit flips are profitable -- at the price of a slowdown on problems like classic jump functions, where a single improving solution has to be found. It has to be left to the algorithm user to set this trade-off suitably. Taking the example of RLS and the \oea as example, we would generally recommend a constant factor performance loss to buy robustness, that is, a constant value of $\gamma$ like, e.g., $\gamma = 0.25$. 

\section{Conclusion}\label{sec:conclusion}

In this work, we proposed a way to combine stagnation detection with heavy-tailed mutation. Our theoretical and experimental results indicate that our new algorithm inherits the good properties of the previous stagnation detection approaches, but is superior in the following respects.

\begin{itemize}
\item The additional use of heavy-tailed mutation greatly speeds up leaving a local optimum if there is more than one improving solution in a certain distance~$m$. This is because to leave the local optimum, it is not necessary anymore to complete phase $m-1$.
\item Compared to the robust SD-RLS, which is the fairest point of comparison, our algorithm is significantly simpler, as it avoids the two nested loops (implemented via the parameters $r$ and $s$ in~\cite{RajabiW21evocop}) that organize the reversion to smaller rates. Compared to the SD-\oea, our approach can obtain the better runtimes of the SD-RLS approaches in the case that few improving solutions are available, and compared to the simple SD-RLS of~\cite{RajabiW21evocop}, our approach surely converges.
\item Again comparing our approach to the robust SD-RLS, our approach gives runtimes with exponential tails. Let $m$ be constant. If the robust SD-RLS misses an improvement in distance $m$ in the $m$-th phase and thus in time $O(n^m)$ -- which happens with probability $n^{-\Theta(1)}$ for typical parameter settings --, then strength $m$ is used again only after the $(m+1)$-st phase, that is, after $\Omega(n^{m+1})$ iterations. If our algorithm misses such an improvement in phase $m$, then in each of the subsequent $\ell_{m+1} = \Omega(n^{m+1})$ iterations, it still has a chance of $\Omega\left(n^{-m}\gamma\right)$ to find this particular improvement. Hence the probability that finding this improvement takes $\Omega(n^{m+1})$ time, is only $(1-\Omega\left(n^{-m}\gamma\right))^{\Omega(n^{m+1})} \le \exp(- \Omega(n\gamma))$.
\end{itemize}

As discussed in Section~\ref{sec:parameters}, the three parameters of our approach are not too critical to set. For these reasons, we believe that our combination of stagnation detection and heavy-tailed mutation is a very promising approach. 

As the previous works on stagnation detection, we have only analyzed stagnation detection in the context of a simple hillclimber. This has the advantage that it is clear that the effects revealed in our analysis are truly caused by our stagnation detection approach. Given that there is now quite some work studying stagnation detection in isolation, for future work it would be interesting to see how well stagnation detection (ideally in the combination with heavy-tailed mutation as proposed in this work) can be integrated into more complex evolutionary algorithms. 

\section*{Acknowledgement}
This work was supported by a public grant as part of the
Investissements d'avenir project, reference ANR-11-LABX-0056-LMH,
LabEx LMH and a research grant by the Danish Council for Independent Research (DFF-FNU  8021-00260B) as well as a travel grant from the Otto Mønsted foundation.

\bibliographystyle{alpha}

\bibliography{alles_ea_master,ich_master, references}

\newcommand{\etalchar}[1]{$^{#1}$}
\begin{thebibliography}{FGQW18b}

\bibitem[ABD20a]{AntipovBD20gecco}
Denis Antipov, Maxim Buzdalov, and Benjamin Doerr.
\newblock Fast mutation in crossover-based algorithms.
\newblock In {\em Genetic and Evolutionary Computation Conference, GECCO 2020},
  pages 1268--1276. {ACM}, 2020.

\bibitem[ABD20b]{AntipovBD20ppsn}
Denis Antipov, Maxim Buzdalov, and Benjamin Doerr.
\newblock First steps towards a runtime analysis when starting with a good
  solution.
\newblock In {\em Parallel Problem Solving From Nature, PPSN 2020, Part~II},
  pages 560--573. Springer, 2020.

\bibitem[ABD21]{AntipovBD21gecco}
Denis Antipov, Maxim Buzdalov, and Benjamin Doerr.
\newblock Lazy parameter tuning and control: choosing all parameters randomly
  from a power-law distribution.
\newblock In {\em Genetic and Evolutionary Computation Conference, GECCO 2021},
  pages 1115--1123. {ACM}, 2021.

\bibitem[AD20]{AntipovD20ppsn}
Denis Antipov and Benjamin Doerr.
\newblock Runtime analysis of a heavy-tailed $(1+(\lambda, \lambda))$ genetic
  algorithm on jump functions.
\newblock In {\em Parallel Problem Solving From Nature, PPSN 2020, Part~II},
  pages 545--559. Springer, 2020.

\bibitem[BBD21]{BamburyBD21}
Henry Bambury, Antoine Bultel, and Benjamin Doerr.
\newblock Generalized jump functions.
\newblock In {\em Genetic and Evolutionary Computation Conference, GECCO 2021},
  pages 1124--1132. {ACM}, 2021.

\bibitem[COY21a]{CorusOY21}
Dogan Corus, Pietro~S. Oliveto, and Donya Yazdani.
\newblock Automatic adaptation of hypermutation rates for multimodal
  optimisation.
\newblock In {\em Foundations of Genetic Algorithms, FOGA 2021}, pages
  4:1--4:12. {ACM}, 2021.

\bibitem[COY21b]{corus2021fast}
Dogan Corus, Pietro~S. Oliveto, and Donya Yazdani.
\newblock Fast immune system-inspired hypermutation operators for combinatorial
  optimization.
\newblock {\em IEEE Transactions on Evolutionary Computation}, 25:956--970,
  2021.

\bibitem[DD20]{DoerrD20bookchapter}
Benjamin Doerr and Carola Doerr.
\newblock Theory of parameter control for discrete black-box optimization:
  provable performance gains through dynamic parameter choices.
\newblock In Benjamin Doerr and Frank Neumann, editors, {\em Theory of
  Evolutionary Computation: Recent Developments in Discrete Optimization},
  pages 271--321. Springer, 2020.
\newblock Also available at \url{https://arxiv.org/abs/1804.05650}.

\bibitem[DFK{\etalchar{+}}18]{DangFKKLOSS18}
Duc{-}Cuong Dang, Tobias Friedrich, Timo K{\"{o}}tzing, Martin~S. Krejca,
  Per~Kristian Lehre, Pietro~S. Oliveto, Dirk Sudholt, and Andrew~M. Sutton.
\newblock Escaping local optima using crossover with emergent diversity.
\newblock {\em {IEEE} Transactions on Evolutionary Computation}, 22:484--497,
  2018.

\bibitem[DJW02]{DrosteJW02}
Stefan Droste, Thomas Jansen, and Ingo Wegener.
\newblock On the analysis of the (1+1) evolutionary algorithm.
\newblock {\em Theoretical Computer Science}, 276:51--81, 2002.

\bibitem[DLMN17]{DoerrLMN17}
Benjamin Doerr, Huu~Phuoc Le, R\'egis Makhmara, and Ta~Duy Nguyen.
\newblock Fast genetic algorithms.
\newblock In {\em Genetic and Evolutionary Computation Conference, GECCO 2017},
  pages 777--784. {ACM}, 2017.

\bibitem[Doe20]{Doerr20gecco}
Benjamin Doerr.
\newblock Does comma selection help to cope with local optima?
\newblock In {\em Genetic and Evolutionary Computation Conference, GECCO 2020},
  pages 1304--1313. {ACM}, 2020.

\bibitem[DZ21]{DoerrZ21aaai}
Benjamin Doerr and Weijie Zheng.
\newblock Theoretical analyses of multi-objective evolutionary algorithms on
  multi-modal objectives.
\newblock In {\em Conference on Artificial Intelligence, {AAAI} 2021}, pages
  12293--12301. {AAAI} Press, 2021.

\bibitem[FGQW18a]{FriedrichGQW18heavysubm}
Tobias Friedrich, Andreas G{\"{o}}bel, Francesco Quinzan, and Markus Wagner.
\newblock Evolutionary algorithms and submodular functions: Benefits of
  heavy-tailed mutations.
\newblock {\em CoRR}, abs/1805.10902, 2018.

\bibitem[FGQW18b]{FriedrichGQW18}
Tobias Friedrich, Andreas G{\"{o}}bel, Francesco Quinzan, and Markus Wagner.
\newblock Heavy-tailed mutation operators in single-objective combinatorial
  optimization.
\newblock In {\em Parallel Problem Solving from Nature, PPSN 2018, Part {I}},
  pages 134--145. Springer, 2018.

\bibitem[FQW18]{FriedrichQW18}
Tobias Friedrich, Francesco Quinzan, and Markus Wagner.
\newblock Escaping large deceptive basins of attraction with heavy-tailed
  mutation operators.
\newblock In {\em Genetic and Evolutionary Computation Conference, {GECCO}
  2018}, pages 293--300. {ACM}, 2018.

\bibitem[Pr{\"{u}}04]{Prugel04}
Adam Pr{\"{u}}gel{-}Bennett.
\newblock When a genetic algorithm outperforms hill-climbing.
\newblock {\em Theoretical Computer Science}, 320:135--153, 2004.

\bibitem[RW20]{RajabiW20}
Amirhossein Rajabi and Carsten Witt.
\newblock Self-adjusting evolutionary algorithms for multimodal optimization.
\newblock In {\em Genetic and Evolutionary Computation Conference, GECCO 2020},
  pages 1314--1322. {ACM}, 2020.

\bibitem[RW21a]{RajabiW21gecco}
Amirhossein Rajabi and Carsten Witt.
\newblock Stagnation detection in highly multimodal fitness landscapes.
\newblock In {\em Genetic and Evolutionary Computation Conference, GECCO 2021},
  pages 1178--1186. {ACM}, 2021.

\bibitem[RW21b]{RajabiW21evocop}
Amirhossein Rajabi and Carsten Witt.
\newblock Stagnation detection with randomized local search.
\newblock In {\em Evolutionary Computation in Combinatorial Optimization,
  EvoCOP 2021}, pages 152--168. Springer, 2021.

\bibitem[Weg01]{Wegener01}
Ingo Wegener.
\newblock Theoretical aspects of evolutionary algorithms.
\newblock In {\em Automata, Languages and Programming, {ICALP} 2001}, pages
  64--78. Springer, 2001.

\bibitem[Wit21]{Witt21}
Carsten Witt.
\newblock On crossing fitness valleys with majority-vote crossover and
  estimation-of-distribution algorithms.
\newblock In {\em Foundations of Genetic Algorithms, FOGA 2021}, pages
  2:1--2:15. {ACM}, 2021.

\bibitem[WQT18]{WuQT18}
Mengxi Wu, Chao Qian, and Ke~Tang.
\newblock Dynamic mutation based {P}areto optimization for subset selection.
\newblock In {\em Intelligent Computing Methodologies, {ICIC} 2018, Part
  {III}}, pages 25--35. Springer, 2018.

\end{thebibliography}

}
\end{document}